\documentclass{article}
\usepackage[preprint]{colm2026_conference}

\usepackage{microtype}
\usepackage{hyperref}
\usepackage{url}
\usepackage{booktabs}
\usepackage{lineno}

\usepackage{amsmath,amsfonts,bm}

\def\eqref#1{equation~\ref{#1}}

\def\1{\bm{1}}

\def\vx{{\bm{x}}}

\DeclareMathAlphabet{\mathsfit}{\encodingdefault}{\sfdefault}{m}{sl}
\SetMathAlphabet{\mathsfit}{bold}{\encodingdefault}{\sfdefault}{bx}{n}

\newcommand{\E}{\mathbb{E}}

\newcommand{\KL}{D_{\mathrm{KL}}}

\usepackage{amsmath,amssymb,amsthm,mathtools,bm}
\usepackage{thm-restate}
\makeatletter
\newcommand{\labelonce}[1]{\ifthmt@thisistheone\label{#1}\fi}
\makeatother
\usepackage[T1]{fontenc}
\usepackage{amsfonts}
\usepackage{algorithm}
\usepackage{algpseudocode}
\usepackage{xcolor}
\usepackage{enumitem}
\usepackage{cleveref}
\usepackage{graphicx}
\usepackage[export]{adjustbox}
\usepackage{subcaption}
\usepackage{wrapfig}
\usepackage{caption}
\usepackage{multirow}
\usepackage{makecell}
\usepackage{colortbl}
\usepackage[skins]{tcolorbox}
\usepackage{pifont}
\usepackage{titletoc}
\newcommand{\cmark}{\ding{51}}
\newcommand{\xmark}{\ding{55}}

\definecolor{darkblue}{rgb}{0,0,0.5}
\definecolor{discretecolor}{RGB}{11,83,150}
\definecolor{gaussiancolor}{RGB}{230,145,56}
\definecolor{argmaxcolor}{RGB}{154,0,0}
\definecolor{grayseq}{RGB}{120,120,120}
\definecolor{ourscolor}{RGB}{218,232,252}
\definecolor{lowuq}{RGB}{45,84,140}     
\definecolor{otherdec}{RGB}{145,92,44}  
\hypersetup{colorlinks=true, citecolor=darkblue, linkcolor=darkblue, urlcolor=darkblue}

\title{Locally Confident, Globally Stuck: The Quality-Exploration Dilemma in Diffusion Language Models}

\author{
\parbox{\textwidth}{
\textbf{Liancheng Fang}$^{1}$,
\textbf{Aiwei Liu}$^{2}$,
\textbf{Henry Peng Zou}$^{1}$,
\textbf{Yankai Chen}$^{3,4}$,
\textbf{Enze Ma}$^{1}$,
\textbf{Leyi Pan}$^{2}$, \\
\textbf{Chunyu Miao}$^{1}$,
\textbf{Wei-Chieh Huang}$^{1}$,
\textbf{Xue Liu}$^{3,4}$,
\textbf{Philip S. Yu}$^{1}$ \\[0.5em]
\textnormal{$^{1}$University of Illinois Chicago, $^{2}$Tsinghua University, $^{3}$MBZUAI, $^{4}$McGill University} \\
{\normalfont\hspace*{0.02em}\texttt{\{lfang87, psyu\}@uic.edu}}
}
}

\newtheorem{definition}{Definition}

\renewcommand{\KL}{\mathrm{KL}}

\renewcommand{\E}{\mathbb{E}}
\renewcommand{\1}{\mathbf{1}}

\newcommand{\bx}{\boldsymbol{x}}

\definecolor{brickred}{rgb}{0.8, 0.25, 0.33}
\definecolor{midnightblue}{rgb}{0.1, 0.1, 0.44}
\definecolor{oceanboatblue}{rgb}{0.0, 0.47, 0.75}
\definecolor{darkgreen}{RGB}{0,128,0}
\definecolor{lightlightgray}{RGB}{230, 230, 230}

\begin{document}

\ifcolmsubmission
\linenumbers
\fi

\maketitle

\begin{abstract}
Diffusion large language models (dLLMs) theoretically permit token decoding in arbitrary order, a flexibility that could enable richer exploration of reasoning paths than autoregressive (AR) LLMs. In practice, however, random-order decoding often hurts generation quality. To mitigate this, low-confidence remasking improves single-sample quality (e.g., Pass@$1$) by prioritizing confident tokens, but it also suppresses exploration and limits multi-sample gains (e.g., Pass@$k$), creating a fundamental \emph{quality--exploration dilemma}.
In this paper, we provide a unified explanation of this dilemma. We show that low-confidence remasking improves a myopic proxy for quality while provably constraining the entropy of the induced sequence distribution. To overcome this limitation, we characterize the optimal distribution that explicitly balances quality and exploration, and develop a simple \emph{Independent Metropolis--Hastings} sampler that approximately targets this distribution during decoding.
Experiments across a range of reasoning benchmarks including MATH500, AIME24/25, HumanEval, and MBPP show that our approach yields better exploration-quality tradeoff than both random and low-confidence remasking.
\end{abstract}

\section{Introduction}\label{sec:intro}
\begin{figure}[h]
  \centering
  \includegraphics[width=\textwidth]{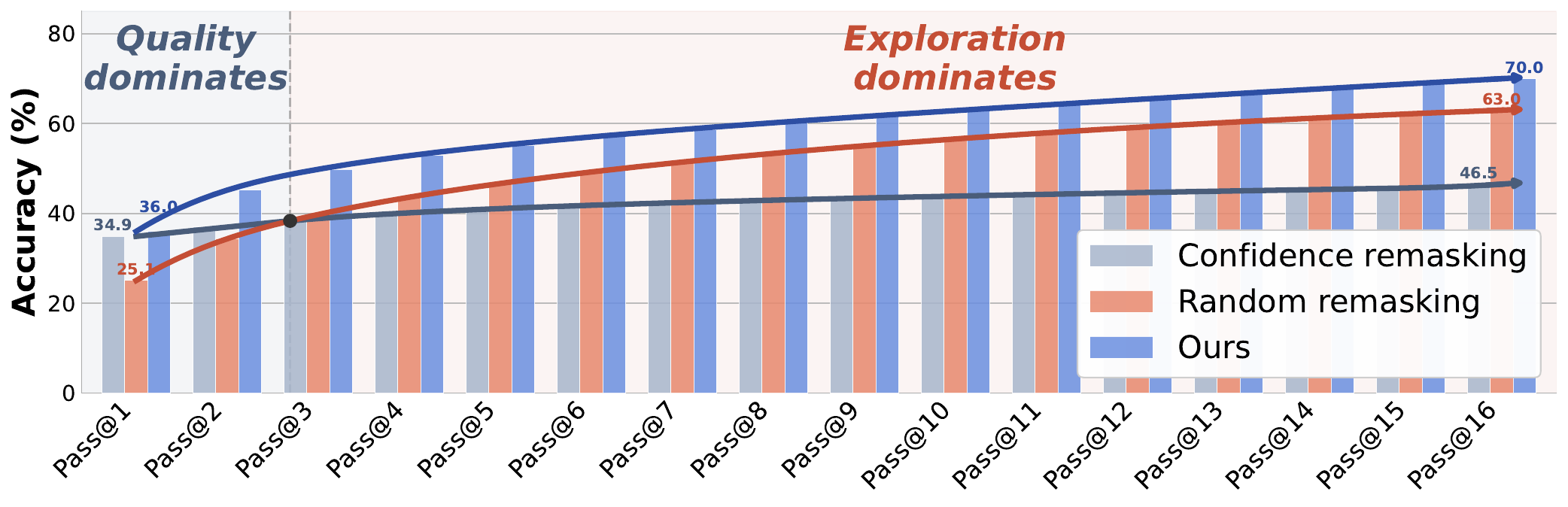}
  \caption{\textbf{The quality--exploration dilemma in dLLM decoding.}
  Confidence remasking achieves high sample quality (Pass@$1$) but plateaus Pass@$k$ quickly due to limited exploration. Conversely, random remasking promotes exploration but degrades individual sample quality. Our \emph{global tempering} reconciles this trade-off, establishing a new Pareto frontier with superior Pass@$1$ and Pass@$16$ performance.}
  \label{fig:teaser}
\end{figure}

Diffusion large language models (dLLMs) generate text by iteratively denoising an entire token sequence in parallel~\citep{Lou2023DiscreteDM,shi2024simplified,ou2024your,sahoo2024simple,nie2025large,ye2025dream,song2025seed}. Unlike autoregressive (AR) LLMs~\citep{radford2018improving,raffel2020exploring,tom2020gpt}, which commit tokens irrevocably in a fixed left-to-right order, dLLMs may finalize tokens at arbitrary positions. This flexibility is intrinsically aligned with the non-linear nature of complex reasoning, where pivotal decisions need not adhere to a strict left-to-right progression~\citep{bachmann2024pitfalls,nagarajan2025roll,kim2025train,yang2025powerful,trainin2026discrete}.

In practice, however, realizing this advantage has proven difficult. Recent work~\citep{ni2026flexibility,shen2026improving,chen2025beyond,fu2025bits,lee2025lookahead} points to a fundamental \emph{quality--exploration dilemma} (\Cref{fig:teaser}): low-confidence remasking strategies, which commit the model's most certain predictions first~\citep{wu2025fast,kim2025train,ben2025accelerated}, improve single-sample quality (Pass@$1$) but saturate quickly under repeated sampling (Pass@$k$)~\citep{ni2026flexibility,shen2026improving}, revealing a severe exploration bottleneck. Random remasking exhibits the opposite behavior: it explores more broadly, yet produces weaker individual samples. This tension motivates our central question:
\begin{center}
\emph{Can a dLLM decoding strategy achieve high per-sample quality without collapsing exploration?}
\end{center}

In this paper, we do so by deriving the target distribution that optimally balances quality and exploration, then designing a new decoding strategy that approximately samples from the optimal distribution. 
The strategy favors globally promising sequences through a lookahead correction that adjusts each local token choice according to \emph{how promising the resulting space of completions is}.
We show that this strategy yields a better exploration-quality tradeoff than both uncertainty-based remasking and random remasking. Our contributions are as follows:
\begin{enumerate}[leftmargin=*]
    \item We provide a unified explanation of the exploration bottleneck in uncertainty-based decoding, where we show that the shared mode-seeking behavior of uncertainty-based decoding improves a myopic quality proxy while imposing a formal entropy cap on the induced sequence distribution.
    \item We formalize the quality--exploration trade-off as an entropy-regularized optimization problem over the joint distribution and characterize the unique optimal solution. 
    \item We design a practical Markov chain Monte Carlo (MCMC) algorithm based on \emph{Independent Metropolis--Hastings} that efficiently approximates this target distribution during decoding using a tractable lookahead correction.
    \item Extensive experiments with LLaDA~\citep{nie2025large} and WeDLM~\citep{liu2025wedlm} on reasoning benchmarks (MATH500, AIME, HumanEval, MBPP) demonstrate that our approach consistently achieves superior exploration--quality trade-offs.
\end{enumerate}

\section{Preliminaries}
\label{sec:prelim}

\textbf{Diffusion Language Models.}
Let $\mathcal{V}$ be a finite vocabulary and $\mathtt{[M]}$ a special mask token. We consider
sequences of length $L$ with index set $[L] \coloneqq \{1, \ldots, L\}$. A diffusion
language model generates samples by learning to reverse a \textit{forward process} that
corrupts a clean sequence $\bx_0 \in \mathcal{V}^L$ into a partially masked
sequence $\bx_t \in (\mathcal{V} \cup \{\mathtt{[M]}\})^L$ by independently
masking each token with probability $t \in [0, 1]$ (linear masking schedule).
The \emph{reverse process} is parameterized by a mask predictor $p_\theta(\cdot \mid
\bx_t)$ that independently predicts all masked tokens from the corrupted input.
It is trained to maximize an evidence lower bound (ELBO) on the data log-likelihood, which
reduces to a weighted denoising cross-entropy on masked positions~\citep{shi2024simplified,
ou2024your,zheng2024masked}:
\begin{equation} 
  \mathcal{L}(\theta)
  \;\coloneqq\;
  -\mathbb{E}_{\bx_0,\, t,\, \bx_t}
  \left[
    \frac{1}{t}
    \sum_{i \in [L]}
    \mathbf{1}[x_t^i = \mathtt{[M]}]
    \;\log p_\theta(\bx_0^i \mid \bx_t)
  \right].
  \label{eq:dllm_weighted_ce}
\end{equation}

The sampling process iteratively refines a fully masked sequence. At each step, the model predicts marginal distributions for all masked positions simultaneously, conditioned on the partially revealed sequence. It then unmasks a subset of these positions by sampling from the marginals independently. Repeating this process gradually yields a complete sequence. 
Two main strategies determine this subset: 
\begin{enumerate}[leftmargin=*]
\item \textit{Random remasking}: selects positions to unmask uniformly at random and remask the remaining masked positions, which is theoretically grounded in $\tau$-leaping~\citep{campbell2022continuous,Lou2023DiscreteDM,ou2024your}.
\item \textit{Uncertainty-based remasking}: selects unmask positions with high token or distribution uncertainty scores. See \Cref{tab:uncertainty-heuristics} for a taxonomy of common uncertainty heuristics.
\end{enumerate}

\textbf{Pass@$k$ as Exploration Metric.}
Pass@$k$ measures the probability of sampling at least one correct solution among $k$ independent generations, serving as a standard proxy for a model's exploration capability~\citep{yue2025does}. Given $c$ correct solutions out of $n$ total samples, its unbiased estimator is given by~\citep{chen2021evaluating}:
\begin{equation} \label{eq:passk}
  \text{Pass@}k = \mathbb{E}\!\left[1 - \frac{\binom{n-c}{k}}{\binom{n}{k}}\right].
\end{equation}
A low Pass@$k$ indicates a fundamental barrier in exploring valid reasoning trajectories.

\begin{table}[t]
\centering
\caption{\textbf{Taxonomy of uncertainty heuristics for uncertainty-based remasking.} We categorize heuristics by their distributional scoring functions ($p_i(\cdot) \coloneqq p(\cdot \mid s_t, i)$) and operational paradigms (\emph{Sample-then-Filter} vs.\ \emph{Rank-then-Sample}). The \emph{$\delta$-gating} column establishes the equivalent confidence lower bound $\max_v p_i(v) \ge 1-\delta$ for each criterion. See \Cref{app:delta-gating} for detailed definitions and formal derivations of these bounds.}
\label{tab:uncertainty-heuristics}
\vspace{0.5em}
\resizebox{\textwidth}{!}{%
\begin{tabular}{@{}llll@{}}
\toprule
\textbf{Heuristic} & \textbf{Commit criterion} & \textbf{Mode} & \textbf{$\delta$-gating} \\
\midrule
Confidence~\citep{nie2025large,ye2025dream}
  & $\max_{v}\, p_i(v) \ge 1-\delta$ & Sample-then-Filter 
  & $\delta$ (directly) \\
\midrule
Entropy~\citep{ben2025accelerated,fu2025bits}
  & $H(p_i) \le \varepsilon$ & Rank-then-Sample
  & $\delta = \delta_V(\varepsilon)$ \\
\midrule
Margin~\citep{kim2025train,hong2025improving}
  & $p_i(v_1) - p_i(v_2) \ge \gamma$ & Rank-then-Sample
  & $\delta = \frac{(|V|-1)(1-\gamma)}{|V|}$ \\
\bottomrule
\end{tabular}%
}
\end{table}

\section{Formalizing the Quality--Exploration Dilemma}
\label{sec:confidence-gating}

To formalize the quality--exploration dilemma, we begin by unifying common uncertainty heuristics under the notion of \emph{confidence gating}.
\begin{definition}[Confidence gating]
\label{def:confidence-gating}
Let \(V\) denote the token vocabulary and \(s_t\) the decoder state at step \(t\). For a threshold parameter \(\delta \in (0,1)\), a decoder is \((1-\delta)\)-gated if, whenever it commits a token, the commit distribution satisfies
\begin{equation}
\max_{v \in V} p(v \mid s_t) \ge 1 - \delta.
\end{equation}
\end{definition}
Intuitively, confidence gating permits a token to be committed only when its marginal distribution is sufficiently peaked. Although stated in terms of confidence, this definition naturally subsumes other common heuristics: entropy and margin-based criteria all reduce to this form under appropriate choices of \(\delta\) (see \Cref{tab:uncertainty-heuristics} for details).

\subsection{The Local Benefit: Improving Quality via Myopic Optimization}
To elucidate why uncertainty-based decoding enhances single-sample performance (e.g., Pass@1), we analyze its implicit objective: the expected log loss under a reference model. Let $q_{\mathrm{gen}}$ denote the joint distribution over generated sequences, $\sigma \in S_L$ the decoding trajectory (where $S_L$ is the permutation group over $1,\dots,L$), and $p_{\mathrm{ref}}$ a scoring model. We define the trajectory-dependent generation loss as:
\begin{equation}
\mathcal{L}_{\mathrm{gen}}(q_{\mathrm{gen}}; p_{\mathrm{ref}}; \sigma)
:=
\mathbb{E}_{\vx\sim q_{\mathrm{gen}}}
\left[
\frac{1}{L}\sum_{t=1}^L
-\log p_{\mathrm{ref}}(\vx_{\sigma_t}\mid \vx_{\sigma_{<t}})
\right].
\end{equation}
Exponentiating this expected loss yields \emph{generative perplexity} (GenPPL), a standard metric for evaluating the generative quality of diffusion models~\citep{Lou2023DiscreteDM,arriola2025block,sahoo2025diffusion}:
\begin{equation}
\label{eq:genppl}
\operatorname{GenPPL}(q_{\mathrm{gen}}; p_{\mathrm{ref}}; \sigma)
:=
\exp\!\bigl(\mathcal{L}_{\mathrm{gen}}(q_{\mathrm{gen}}; p_{\mathrm{ref}}; \sigma)\bigr).
\end{equation}
Intuitively, GenPPL measures the sequence-level surprisal under the reference model. By setting $p_{\mathrm{ref}} = p_\theta$, we evaluate the model's self-consistency along the generation trajectory. Under this self-scoring regime, the expected generation loss decomposes into a sum of step-wise conditional entropies. Specifically, when the decoder commits a token by sampling from a chosen position's marginal distribution $p_\theta(\cdot \mid s_t)$, the expected one-step loss incurred is exactly the entropy of that distribution: $\mathbb{E}_{x}[-\log p_\theta(x \mid s_t)] = H(p_\theta(\cdot \mid s_t))$. Consequently, confidence gating---which prioritizes highly peaked, low-entropy distributions---acts as a myopic, greedy heuristic to minimize $\mathcal{L}_{\mathrm{gen}}$. \Cref{prop:genppl-upper} formalizes this mechanism, proving that bounding the step-wise entropy strictly bounds the global generative perplexity.

\begin{restatable}[Generative perplexity upper bound under confidence gating]{proposition}{genpplupperproposition}
\label{prop:genppl-upper}
Assume the decoder is \((1-\delta)\)-gated, the committed token at each step is sampled from the decoder's commit distribution, and the scoring model is the decoder itself, i.e., \(p_{\mathrm{ref}} = p_\theta\). Then for any decoding trajectory \(\sigma\),
\[
\operatorname{GenPPL}(q_{\mathrm{gen}};\, p_\theta;\, \sigma) \le \exp\!\bigl(h_V(\delta)\bigr),
\]
where \(h_V(\delta) = h_b(\delta) + \delta \log(|V|-1)\) and \(h_b(\delta) = -\delta \log \delta - (1-\delta)\log(1-\delta)\).
\end{restatable}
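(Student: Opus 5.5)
The plan is to reduce the global bound to a per-step entropy bound, using the decomposition sketched just before \Cref{prop:genppl-upper}. Since $\exp$ is monotone, it suffices to show $\mathcal{L}_{\mathrm{gen}}(q_{\mathrm{gen}}; p_\theta; \sigma) \le h_V(\delta)$.

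First I would write the loss by linearity of expectation and the tower property as
\[
\mathcal{L}_{\mathrm{gen}} = \frac{1}{L}\sum_{t=1}^L \mathbb{E}_{s_t}\Bigl[\mathbb{E}\bigl[-\log p_\theta(\vx_{\sigma_t}\mid \vx_{\sigma_{<t}}) \,\big|\, s_t\bigr]\Bigr].
\]
Here $s_t$ is the decoder state just before the $t$-th commit. I identify $p_\theta(\cdot\mid \vx_{\sigma_{<t}})$ with the commit distribution $p_\theta(\cdot\mid s_t)$ at the chosen position. This uses the self-scoring assumption $p_{\mathrm{ref}}=p_\theta$ together with the fact that the state at the $t$-th commit is exactly the partially revealed sequence $\vx_{\sigma_{<t}}$. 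Because the committed token is sampled from that same distribution, the inner conditional expectation equals $H(p_\theta(\cdot\mid s_t))$.

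Next I would bound each step entropy using gating alone. Write $p = p_\theta(\cdot\mid s_t)$ and let $v^\star$ be its argmax, with mass $1-\varepsilon$, where $\varepsilon \le \delta$ by the gating condition. Apply the grouping rule to split off $v^\star$:
\[
H(p) = h_b(\varepsilon) + \varepsilon\, H(p\mid V\setminus\{v^\star\}) \le h_b(\varepsilon) + \varepsilon \log(|V|-1) = h_V(\varepsilon).
\]
This is the Fano-type bound.

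The main obstacle is then passing from $\varepsilon$ to $\delta$. For this I need $h_V$ to be nondecreasing on $[0,\delta]$. Its derivative is $\log\frac{1-\varepsilon}{\varepsilon} + \log(|V|-1)$, which is nonnegative iff $\varepsilon \le (|V|-1)/|V|$. So monotonicity holds whenever $\delta \le 1-1/|V|$, which is the meaningful regime. If $\delta > 1-1/|V|$, the gate is vacuous, since the maximum probability is always at least $1/|V|$. In that case I would handle it separately: $H(p)\le \log|V| = h_V(1-1/|V|)$, and this still gives a valid cap after redefining the bound as $h_V(\min\{\delta, 1-1/|V|\})$. Alternatively, I would simply state the standing assumption $\delta\le 1-1/|V|$.

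Finally, with $H(p_\theta(\cdot\mid s_t)) \le h_V(\delta)$ holding pointwise in $s_t$ and $t$, averaging over $t$ and taking expectations gives $\mathcal{L}_{\mathrm{gen}}\le h_V(\delta)$. Exponentiating yields the claim for every trajectory $\sigma$.
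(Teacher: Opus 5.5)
Your proposal is correct and follows essentially the same route as the paper: reduce $\mathcal{L}_{\mathrm{gen}}$ via the tower property to the per-step entropies $H(p_\theta(\cdot\mid s_t))$, bound each by the Fano-type maximal-entropy argument together with monotonicity of $h_V$ on $[0,1-1/|V|]$, then average over steps and exponentiate. Your grouping-rule derivation does the same job as the paper's concavity argument, and your caveat that the bound requires $\delta \le 1-1/|V|$ is correct: the paper imposes this restriction explicitly only in the proof of \Cref{prop:entropy-cap}, and for $\delta > 1-1/|V|$ the statement as written fails, since then $h_V(\delta) < \log|V|$ while the uniform distribution passes the gate.
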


The bound $\exp(h_V(\delta))$ corresponds precisely to the maximum entropy of any vocabulary distribution whose largest probability mass is at least $1-\delta$. By controlling the worst-case one-step uncertainty, confidence gating effectively bounds the accumulated self-cross-entropy along the generation trajectory. This provides a principled explanation for why uncertainty-based heuristics improve single-sample quality. Nevertheless, because each commit decision fundamentally alters the context for future steps, this approach remains a \emph{myopic} surrogate rather than a direct optimization of the global sequence-level objective.

\subsection{The Global Cost: Suppressing Exploration via Entropy Collapse}
\label{sec:entropy-cap}
While the mode-seeking behavior of confidence gating improves single-sample quality, it inherently restricts sequence-level diversity, which ultimately hurts multi-sample performance (e.g., Pass@$k$). \Cref{prop:entropy-cap} formalizes this limitation by bounding the sequence entropy $H(X)$ and the effective branching factor~\citep{jm3}.

\begin{restatable}[Entropy cap under confidence gating]{proposition}{entropycapproposition}
\label{prop:entropy-cap}
Assume the dLLM decoder is \((1-\delta)\)-gated. Then the sequence entropy satisfies
\[
H(X) \le L \cdot h_V(\delta),
\]
where \(h_V(\delta) = h_b(\delta) + \delta \log(|V|-1)\) and \(h_b(\delta) = -\delta \log \delta - (1-\delta)\log(1-\delta)\) is the binary entropy function. Equivalently, the effective branching factor, defined as the per-token perplexity of the induced sequence distribution,
\[
B_{\mathrm{eff}} := \exp(H(X)/L),
\]
is upper bounded as
\[
B_{\mathrm{eff}} \le \exp\!\bigl(h_V(\delta)\bigr).
\]
\end{restatable}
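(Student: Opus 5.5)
The plan is to bound $H(X)$ by the entropy of the whole decoding process, expand that entropy with the chain rule, and cap each committed token's conditional entropy using the gating condition.

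Let $\Sigma$ denote the random decoding order, meaning the sequence of positions committed, together with any internal randomness used to choose positions. Since $X$ is a deterministic function of the full trajectory $(\Sigma, X_{\Sigma_1},\dots,X_{\Sigma_L})$, we have $H(X)\le H(\text{trajectory})$. This inequality alone is too lossy, because $H(\text{trajectory})$ also counts $H(\Sigma)$. We therefore use the sharper route $H(X)\le H(X\mid \Sigma)+I(X;\Sigma)$ only if needed. More cleanly, note that the order-selection randomness should not be counted at all. For a rank-then-sample or sample-then-filter decoder, the choice of which position to commit next is a function of the current state $s_t$ (for example, the argmax of confidence), possibly together with the sampled candidate tokens. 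So first I would formalize the decoder as a sequence of steps. At step $t$, given state $s_t$ (the partially unmasked sequence), it deterministically selects a position $i_t=\pi(s_t)$ among those meeting the gate, then samples $x^{i_t}\sim p(\cdot\mid s_t,i_t)$. If several tokens are committed per step, treat them as sequential single-token commits. Each commit still satisfies the gate, and the product of marginals contributes a sum of entropies.

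Under this formalization, $s_{t+1}$ is a function of $(s_t, x^{i_t})$, so the trajectory is determined by the committed token values $Y_t:=x^{i_t}$, $t=1,\dots,L$. The map $(Y_1,\dots,Y_L)\mapsto X$ is well defined because positions are recoverable from the states. Hence $H(X)\le H(Y_1,\dots,Y_L)=\sum_t H(Y_t\mid Y_{<t})=\sum_t \mathbb{E}\bigl[H(p(\cdot\mid s_t))\bigr]$, since $s_t$ is a function of $Y_{<t}$. In fact equality holds, because $X$ determines the $Y$'s given the deterministic policy. Next, I would apply a Fano-type lemma: any distribution $p$ on $V$ with $\max_v p(v)\ge 1-\delta$ has $H(p)\le h_V(\delta)$, provided $\delta\le (|V|-1)/|V|$. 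To prove it, write $p_{\max}=1-\delta'$ with $\delta'\le\delta$. The grouping rule gives $H(p)=h_b(\delta')+\delta' H(\text{rest})\le h_b(\delta')+\delta'\log(|V|-1)=h_V(\delta')$. The function $h_V$ is nondecreasing on $[0,(|V|-1)/|V|]$, with derivative $\log\frac{(1-\delta)(|V|-1)}{\delta}\ge0$. Summing $L$ such terms gives $H(X)\le L\,h_V(\delta)$. Dividing by $L$ and exponentiating gives the bound on $B_{\mathrm{eff}}$.

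The main obstacle is handling randomness in position selection, for instance random tie-breaking, or sample-then-filter schemes where rejected samples influence the order. With external randomness $R$, I would bound $H(X)\le H(X,R)$ only if $R$ is negligible. A better route is to condition: write $H(X)\le H(X\mid R)+I(X;R)$, or simply assume, as is implicit in the statement, that position selection is a deterministic function of the state. I would state this assumption explicitly, and note the implicit requirement $\delta\le 1-1/|V|$ for monotonicity.
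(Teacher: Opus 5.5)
Your proof is correct and follows essentially the same route as the paper's. Both bound $H(X)$ by the entropy of the committed-token sequence, expand it by the chain rule, and cap each step's conditional entropy by $h_V(\delta)$ using the max-entropy bound plus monotonicity of $h_V$ on $[0,1-1/|V|]$. The paper treats multi-token steps through within-step conditional independence rather than by serializing them, and it relies on the same assumption you state explicitly, namely that the state (hence the position choice) is a deterministic function of past commits.
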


\Cref{prop:entropy-cap} establishes a hard \emph{entropy budget} that depends only on
the gating threshold~$\delta$ and the vocabulary size~$|V|$. The induced
quantity~$B_{\mathrm{eff}}$, which measures the geometric-mean number of
effective tokens available at each decoding step, is sharply constrained under
strong gating. As a concrete instantiation, setting $|V|=5\times10^{4}$ and
$\delta=0.05$ yields $B_{\mathrm{eff}}< 2.1$, indicating that the
decoder concentrates almost all probability mass on fewer than three candidates
per step. This tight ceiling explains the poor Pass@$k$ scaling: even if the model internally supports multiple correct reasoning paths, a strongly gated decoder cannot simultaneously explore these valid modes.


\section{Principled Exploration via Global Tempering}
\label{sec:method}

\subsection{The Optimal Target Distribution}
\label{sec:global_tempering}
Motivated by the limitations of local heuristics discussed in \Cref{sec:confidence-gating}, we now formalize the quality--exploration trade-off at the level of complete-sequence distributions. We define generation quality as the expected log-likelihood under the base model, $\mathbb{E}_{x \sim p}[\log q(x)]$, and exploration as the Shannon entropy $H(p)$ of the sequence distribution. Rather than relying on step-wise proxies, we optimize directly over the joint distribution $p$, yielding the entropy-regularized objective:
\begin{equation}
\label{eq:obj_maxent}
\max_{p \in \Delta(\mathcal{X})} \underbrace{\colorbox{cyan!10}{$\alpha \, \mathbb{E}_{\vx \sim p}[\log q(\vx)]$}}_{\text{quality}} + \underbrace{\colorbox{red!10}{$H(p)$}}_{\text{exploration}},
\end{equation}
where $\alpha \ge 0$ controls the trade-off: larger $\alpha$ places more mass on high-likelihood sequences, while the entropy term discourages mode collapse.

\begin{restatable}[Optimality of the power distribution]{proposition}{poweroptproposition}
\label{prop:power-opt}
For any $\alpha \geq 0$, the unique optimizer of~\eqref{eq:obj_maxent} is the power distribution
\begin{equation*}
  p_\alpha^\star(\bx)=\frac{q(\bx)^\alpha}{Z_\alpha},
  \qquad
  Z_\alpha=\sum_{\bx\in\mathcal{X}} q(\bx)^\alpha.
\end{equation*}
\end{restatable}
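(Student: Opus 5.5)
The plan is the standard Gibbs variational argument: rewrite the objective in~\eqref{eq:obj_maxent} as a negative KL divergence to $p_\alpha^\star$ plus a constant, then use Gibbs' inequality. First I would handle the support issue. Since $\mathcal{X}$ is finite, the objective is well defined on $\Delta(\mathcal{X})$, with the convention $\log 0=-\infty$. For $\alpha>0$, any $p$ that puts mass on $\{\bx : q(\bx)=0\}$ has objective $-\infty$ and cannot be optimal. So I restrict to $p$ supported on $\mathrm{supp}(q)$, and read $p_\alpha^\star$ and $Z_\alpha$ as sums over that set. For $\alpha=0$ I use the convention $q^0\equiv 1$, so $p_0^\star$ is uniform on $\mathcal{X}$ and the claim reduces to uniqueness of the maximum-entropy distribution; the argument below covers this case too.

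The key step is the identity. Since $\log p_\alpha^\star(\bx)=\alpha\log q(\bx)-\log Z_\alpha$, for any $p$ supported where $q>0$,
\[
\alpha\,\E_{\bx\sim p}[\log q(\bx)] + H(p) = \sum_{\bx} p(\bx)\log\frac{p_\alpha^\star(\bx)}{p(\bx)} + \log Z_\alpha = -\KL\bigl(p\,\|\,p_\alpha^\star\bigr) + \log Z_\alpha .
\]
The right-hand side follows by adding and subtracting $\sum_{\bx} p(\bx)\log Z_\alpha=\log Z_\alpha$. This is a routine rearrangement with only finitely many terms, so there are no convergence issues, and $0<Z_\alpha<\infty$ because $q$ is a probability mass function on a finite set with nonempty support.

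Gibbs' inequality then gives $\KL(p\|p_\alpha^\star)\ge 0$, with equality if and only if $p=p_\alpha^\star$. It follows from strict concavity of $\log$ through Jensen: equality forces $p_\alpha^\star/p$ to be constant on $\mathrm{supp}(p)$, and since both sides sum to one this forces $p=p_\alpha^\star$. Hence the objective is at most $\log Z_\alpha$, the bound is attained exactly at $p_\alpha^\star$, and the maximizer is unique. As an alternative, the same conclusion follows from Lagrange multipliers together with strict concavity of $H$, with the objective linear plus strictly concave in $p$.

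The main obstacle is this equality case together with the boundary bookkeeping. One must check that a candidate $p$ cannot exploit zeros of $q$ or of $p$ (using $0\log 0=0$) to tie the maximum. This is where I would be careful: state the conventions explicitly and note that $\mathrm{supp}(p_\alpha^\star)=\mathrm{supp}(q)$ for $\alpha>0$.
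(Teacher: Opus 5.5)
Your proposal is correct and follows the paper's own proof: it also rewrites the objective as $-\KL(p\,\|\,q^\alpha/Z_\alpha)+\log Z_\alpha$ and uses Gibbs' inequality for uniqueness, alongside the Lagrange-multiplier computation you mention as an alternative. Your explicit handling of zeros of $q$ and of the case $\alpha=0$ is extra bookkeeping the paper omits, but it does not change the argument.
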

Formally, $p_\alpha^\star$ applies a \emph{global} temperature scaling to the joint sequence distribution $q(\bx)$, flattening ($\alpha < 1$) or sharpening ($\alpha > 1$) the sequence-level distribution. Unlike conventional local logit tempering, $p_\alpha^\star$ operates over the entire sequence. We now address the resulting algorithmic challenge: efficiently sampling from this globally tempered distribution.

\subsection{Corrected Conditionals: Exact Form and Tractable Approximation}
\label{sec:corrected_conditionals}

Sampling from the globally tempered distribution $p^\star_\alpha(\bx) \propto q(\bx)^\alpha$ presents two major obstacles in the dLLM setting. First, the normalizing constant $Z_\alpha$ is intractable to compute over the exponentially large sequence space $\mathcal{X} = \mathcal{V}^L$. While autoregressive LLMs can circumvent this issue using Markov chain Monte Carlo (MCMC) samplers that only require evaluating unnormalized sequence likelihoods~\citep{karan2025reasoning}, dLLMs face a second, more severe hurdle: their exact sequence likelihood $q(\bx)$ is itself intractable~\citep{shi2024simplified,pan2025d}. Consequently, standard sequence-level MCMC techniques cannot be directly applied.

To bypass the need for evaluating intractable sequence-level likelihoods, we shift our perspective from the global joint distribution to the local one-step generation. Fundamentally, the generative process of a dLLM samples this joint distribution by iteratively sampling from a sequence of marginal conditionals. A natural question thus arises: for a global tempered joint distribution, what is the exact form of the resulting marginal conditionals at each decoding step? We can analytically derive this marginal conditional below:

\begin{restatable}[Corrected conditional for global tempering]{proposition}{exactcorrectedlemma}
\label{lem:exact-corrected-conditional}
Let \(q\) be a distribution over \(\mathcal{X} = \mathcal{V}^L\) with full support, and fix \(\alpha > 0\). Let \(s = (A, x_A)\) denote a partially decoded state with committed positions \(A \subseteq [L]\) and remaining positions \(R(s) := [L] \setminus A\). For any uncommitted position \(i \in R(s)\) and token \(v \in \mathcal{V}\), let \(\boldsymbol{\ell}_i(s)\) be the logit vector such that \(q(\vx_i = v \mid s) = \operatorname{softmax}(\boldsymbol{\ell}_i(s))_v\). Letting \(s' = s \oplus (i, v)\) denote the updated state after committing token \(v\) at position \(i\), define
\begin{equation*}
{\color{argmaxcolor}\Delta_{i,v}(s)}
:=
\log
\sum_{\vx_{R(s')} \in V^{|R(s')|}}
q(\vx_{R(s')} \mid s')^\alpha.
\end{equation*}
Then the conditional of \(p_\alpha^\star(x) \propto q(x)^\alpha\) at position \(i\) is
\begin{equation}
p_\alpha^\star(\vx_i = v \mid s)
=
\frac{
\exp\!\bigl({\color{discretecolor}\alpha \boldsymbol{\ell}_{i,v}(s)} + {\color{argmaxcolor}\Delta_{i,v}(s)}\bigr)
}{
\sum_{u \in V}
\exp\!\bigl({\color{discretecolor}\alpha \boldsymbol{\ell}_{i,u}(s)} + {\color{argmaxcolor}\Delta_{i,u}(s)}\bigr)
}.
\labelonce{eq:exact_corrected_conditional}
\end{equation}
\end{restatable}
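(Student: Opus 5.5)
The plan is a direct computation: write the conditional of $p_\alpha^\star$ given the state $s=(A,\vx_A)$ as a ratio of marginal sums. Then factor $q$ by the chain rule so that the committed prefix cancels and the position-$i$ factor separates from the remaining positions. Concretely, by definition of conditioning,
\[
p_\alpha^\star(\vx_i=v\mid s)=\frac{\sum_{\vx_{R(s')}} q(\vx_A,\vx_i=v,\vx_{R(s')})^\alpha}{\sum_{u\in\mathcal{V}}\sum_{\vx_{R(s\oplus(i,u))}} q(\vx_A,\vx_i=u,\vx_{R(s\oplus(i,u))})^\alpha}.
\]
The normalizer $Z_\alpha$ cancels between numerator and denominator. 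The denominator is simply the numerator summed over $u$, since $R(s)=\{i\}\cup R(s')$.

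Next, I factor each joint probability using full support, which guarantees that every conditional is well defined:
\[
q(\vx_A,v,\vx_{R(s')})=q(\vx_A)\,q(\vx_i=v\mid s)\,q(\vx_{R(s')}\mid s').
\]
Raising to the power $\alpha$, the factor $q(\vx_A)^\alpha$ is common to all terms and cancels. The factor $q(\vx_i=v\mid s)^\alpha$ does not depend on $\vx_{R(s')}$ and can be pulled out of the inner sum. What remains inside is exactly $\sum_{\vx_{R(s')}} q(\vx_{R(s')}\mid s')^\alpha=\exp(\Delta_{i,v}(s))$. The numerator therefore becomes $q(\vx_i=v\mid s)^\alpha\exp(\Delta_{i,v}(s))$, and the denominator is the same expression summed over $u$.

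Finally, I substitute $q(\vx_i=v\mid s)=\exp(\boldsymbol{\ell}_{i,v}(s))/\sum_w\exp(\boldsymbol{\ell}_{i,w}(s))$. The softmax normalizer, raised to the power $\alpha$, is independent of $v$ and cancels between numerator and denominator. This leaves $\exp(\alpha\boldsymbol{\ell}_{i,v}(s)+\Delta_{i,v}(s))$ normalized over $u$, which is \eqref{eq:exact_corrected_conditional}.

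The only delicate point is interpretive rather than computational. Here $q(\cdot\mid s)$ must denote the true conditional of the joint $q$ given $\vx_A$. It is not an arbitrary dLLM predictor, which need not be consistent with a single joint. With that reading, the chain-rule factorization is exact. I would state this interpretation explicitly at the start of the proof, and also note that full support ensures all conditionals and logarithms are finite.
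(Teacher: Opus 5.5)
Your proposal is correct and follows the paper's proof essentially step for step. Both write the conditional as a ratio of marginal sums, cancel the normalizers, use the chain rule to pull $q(\vx_i=v\mid s)^\alpha$ out of the sum over the remaining positions, and cancel the $v$-independent softmax normalizer. The differences are cosmetic: you explicitly factor out and cancel $q(\vx_A)^\alpha$, which the paper compresses into the assertion $p_\alpha^\star(\cdot\mid s)\propto q(\cdot\mid s)^\alpha$, and your remark that $q(\cdot\mid s)$ must be the true conditional of the joint is a sensible clarification.
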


\Cref{eq:exact_corrected_conditional} features two interpretable terms: a \textbf{local tempering} term {\color{discretecolor}\(\alpha\boldsymbol{\ell}_{i,v}(s)\)} and a \textbf{suffix lookahead correction} {\color{argmaxcolor}\(\Delta_{i,v}(s)\)}, which measures the total tempered probability mass of the completions reachable after committing \(v\). 
Crucially, {\color{argmaxcolor}\(\Delta_{i,v}(s)\)} does not simply favor the most locally confident token. Instead, it favors tokens that preserve more \emph{globally promising continuations}. This is why it can mitigate the entropy cap of confidence-based decoding: confidence gating suppresses exploration by forcing low entropy at the current step, whereas the lookahead correction evaluates each token by the value of the continuation space it leaves available, avoiding premature collapse onto a single reasoning path.

\noindent\textbf{Tractable approximation via mean-field factorization.}
Evaluating the exact lookahead correction {\color{argmaxcolor}\(\Delta_{i,v}(s)\)} is computationally prohibitive due to the exponential cardinality of the suffix space. However, the mask-prediction objective of dLLMs inherently models the sequence via conditionally independent marginals given the current state (\Cref{eq:dllm_weighted_ce}). This structural property motivates a mean-field approximation~\citep{zhao2025d1}, allowing us to factorize the intractable suffix joint as $q(\vx_{R(s')} \mid s') \approx \prod_{j \in R(s')} q(\vx_j \mid s')$. This factorization decouples the $\alpha$-power summation along the sequence length:
\begin{equation*}
{\color{argmaxcolor}\widehat{\Delta}_{i,v}(s)}
:=
\sum_{j \in R(s')}
\log
\Biggl(
\sum_{u \in V} q(\vx_j = u \mid s')^\alpha
\Biggr),
\qquad s' = s \oplus (i, v).
\end{equation*}
Consequently, the surrogate correction $\widehat{\Delta}_{i,v}(s)$ becomes readily computable. It requires only a \emph{single} network evaluation at the proposed state $s'$ to obtain the requisite marginals. Plugging this approximation back into~\eqref{eq:exact_corrected_conditional} yields a one-step sampling target:
\begin{equation}
\pi_i(v \mid s)
\propto
\exp\!\bigl({\color{discretecolor}\alpha \boldsymbol{\ell}_{i,v}(s)} + {\color{argmaxcolor}\widehat{\Delta}_{i,v}(s)}\bigr),
\qquad v \in V.
\label{eq:approx_position_target}
\end{equation}

\subsection{Independent Metropolis--Hastings with Batched Lookahead}
\label{sec:imh}

\begin{wrapfigure}{R}{0.50\textwidth}
\vspace{-2.2em}
\begin{minipage}{0.50\textwidth}
\begin{algorithm}[H]
\caption{Batched IMH for one-token corrected sampling}
\label{alg:imh-token}
\begin{algorithmic}[1]
\Require State \(s\); position \(i \in R(s)\); number of proposals \(T\); exponent \(\alpha\)
\Ensure A token approximately distributed as \(\pi_i(\cdot \mid s)\) in \eqref{eq:approx_position_target}
\State Sample \(x \sim r_i(\cdot \mid s)\) using \eqref{eq:proposal}
\State Draw i.i.d. proposals \(y_1, \ldots, y_T \sim r_i(\cdot \mid s)\)
\State Compute \({\color{argmaxcolor}\widehat{\Delta}_{i,x}(s)}\) and \({\color{argmaxcolor}\widehat{\Delta}_{i,y_t}(s)}\) for \(t \in \{1, \dots, T\}\) in one batch
\For{\(t = 1, \ldots, T\)}
    \State \(a \leftarrow \min\{1, \exp\bigl({\color{argmaxcolor}\widehat{\Delta}_{i,y_t}(s)} - {\color{argmaxcolor}\widehat{\Delta}_{i,x}(s)}\bigr)\}\)
    \State Sample \(u \sim \mathrm{Uniform}(0,1)\)
    \If{\(u < a\)}
        \State \(x \leftarrow y_t\)
        \State \({\color{argmaxcolor}\widehat{\Delta}_{i,x}(s)} \leftarrow {\color{argmaxcolor}\widehat{\Delta}_{i,y_t}(s)}\)
    \EndIf
\EndFor
\State \Return \(x\)
\end{algorithmic}
\end{algorithm}
\end{minipage}
\vspace{-1em}
\end{wrapfigure}

Given the pointwise evaluable one-step target $\pi_i(\cdot \mid s)$, we employ Markov Chain Monte Carlo (MCMC) to sample from this unnormalized distribution. Specifically, we propose an independent Metropolis--Hastings (IMH) sampler~\citep{hastings1970monte}. For a fixed decoding state $s$ and position $i$, we define the independence proposal as the locally tempered logits without correction:
\begin{equation}\label{eq:proposal}
  r_i(v \mid s) \;:=\; \operatorname{softmax}\!\bigl({\color{discretecolor}\alpha\,\boldsymbol{\ell}_i(s)}\bigr)_v\,.
\end{equation}

This proposal distribution is chosen for two primary reasons. First, we empirically find that it captures the dominant probability mass of the target distribution (see \Cref{tab:imh-convergence} for validation). Second, because the target factorizes as $\pi_i(v \mid s) \propto r_i(v \mid s)\exp({\color{argmaxcolor}\widehat{\Delta}_{i,v}(s)})$, the local ${\color{discretecolor}\alpha\boldsymbol{\ell}}$ terms exactly cancel within the Metropolis--Hastings ratio (see \Cref{app:imh-derivation} for the detailed derivation), yielding a highly streamlined acceptance probability:
\begin{equation}\label{eq:acceptance}
  A(x \to y)
  \;=\;
  \min\!\Bigl\{1,\;\exp\!\bigl({\color{argmaxcolor}\hat\Delta_{i,y}(s)} - {\color{argmaxcolor}\hat\Delta_{i,x}(s)}\bigr)\Bigr\}\,.
\end{equation}
Consequently, the IMH sampler generates candidates from $r_i$ and accepts them via~\Cref{eq:acceptance}, relying solely on the difference in suffix corrections without requiring vocabulary-wide normalization. The full algorithm is presented in \Cref{alg:imh-token}.

\textbf{Batched evaluation.}
A primary computational advantage of our IMH sampler is that the proposal distribution $r_i$ depends strictly on the outer context $(s, i)$, entirely decoupled from the Markov chain's current state. This permits all $T$ proposals and their associated lookahead corrections to be evaluated concurrently in a \emph{single} batched forward pass. Consequently, IMH introduces zero sequential overhead compared to original dLLM decoding.

\textbf{MCMC mixing.} Traditional MCMC algorithms frequently suffer from slow mixing when deployed in high-dimensional discrete spaces. Our IMH sampler circumvents this bottleneck through two structural advantages. First, by formulating the Markov chain to target a 1D categorical distribution over the vocabulary $\mathcal{V}$, we completely bypass the curse of dimensionality. Second, since our proposal distribution $r_i$ tightly approximates the target by incorporating dominant local logits, standard MCMC theory guarantees rapid geometric convergence~\citep{mengersen1996rates}. We empirically validate both points in \Cref{sec:experiments}.

\section{Experiments}
\label{sec:experiments}

\begin{table}[t]
\centering
\caption{\textbf{Pass@$\bm{1}$ and Pass@$\bm{k}$ across benchmarks and models.} \textbf{Bold} indicates the best result for a given metric. LLaDA-8B results on AIME24/25 are omitted as they are close to 0.}
\label{tab:main}
\setlength{\tabcolsep}{5pt}
\renewcommand{\arraystretch}{1.05}
\small
\resizebox{\textwidth}{!}{%
\begin{tabular}{@{}ll *{3}{c} *{4}{c}@{}}
\toprule
& & \multicolumn{3}{c}{\textbf{Math reasoning}}
& \multicolumn{4}{c}{\textbf{Code generation}} \\
\cmidrule(lr){3-5} \cmidrule(l){6-9}
\textbf{Method} & \textbf{Metric}
& MATH500 & AIME'24 & AIME'25 & HumanEval & HumanEval+ & MBPP & MBPP+ \\
\midrule
\rowcolor{blue!8}\multicolumn{9}{c}{\textbf{WeDLM-8B}} \\
\midrule
Entropy$^\dagger$
  & Pass@1 & 0.528 & 0.093 & 0.071 & 0.732 & 0.684 & \textbf{0.784} & \textbf{0.677} \\
  & Pass@k & \textbf{0.875} & 0.310 & 0.291 & 0.950 & 0.911 & 0.916 & 0.739 \\
\cmidrule{1-9}
Confidence
  & Pass@1 & 0.528 & 0.079 & \textbf{0.074} & 0.725 & 0.678 & 0.707 & 0.598 \\
  & Pass@k & 0.851 & 0.323 & 0.292 & 0.905 & 0.857 & 0.834 & 0.717 \\
\cmidrule{1-9}
Margin
  & Pass@1 & 0.393 & 0.058 & 0.031 & 0.584 & 0.537 & 0.589 & 0.490 \\
  & Pass@k & 0.838 & 0.296 & 0.269 & 0.903 & 0.843 & 0.765 & 0.643 \\

\midrule
IMH (ours)
  & Pass@1 & \textbf{0.540} & \textbf{0.095} & \textbf{0.074} & \textbf{0.745} & \textbf{0.696} & 0.776 & 0.664 \\
  & Pass@k & \textbf{0.875} & \textbf{0.344} & \textbf{0.344} & \textbf{0.962} & \textbf{0.933} & \textbf{0.938} & \textbf{0.836} \\
\midrule
\rowcolor{blue!8}\multicolumn{9}{c}{\textbf{LLaDA-8B}} \\
\midrule
Confidence$^\dagger$
  & Pass@1 & 0.349 & - & - & \textbf{0.410} & \textbf{0.372} & \textbf{0.484} & \textbf{0.404} \\
  & Pass@k & 0.465 & - & - & 0.573 & 0.536 & 0.645 & 0.521 \\
\cmidrule{1-9}
Entropy
  & Pass@1 & 0.294 & - & - & 0.387 & 0.344 & 0.444 & 0.387 \\
  & Pass@k & 0.606 & - & - & 0.670 & 0.603 & 0.751 & 0.656 \\
\cmidrule{1-9}
Margin
  & Pass@1 & 0.286 & - & - & 0.350 & 0.315 & 0.451 & 0.388 \\
  & Pass@k & 0.594 & - & - & 0.610 & 0.573 & 0.716 & 0.589 \\
\cmidrule{1-9}
Random
  & Pass@1 & 0.251 & - & - & 0.196 & 0.176 & 0.279 & 0.251 \\
  & Pass@k & 0.630 & - & - & 0.549 & 0.524 & 0.685 & 0.605 \\
\cmidrule{1-9}
IMH (ours)
  & Pass@1 & \textbf{0.360} & - & - & 0.385 & 0.345 & 0.469 & 0.399 \\
  & Pass@k & \textbf{0.700} & - & - & \textbf{0.695} & \textbf{0.640} & \textbf{0.778} & \textbf{0.672} \\
\bottomrule
\addlinespace[2pt]
\multicolumn{9}{@{}l}{\footnotesize $^\dagger$\,Default decoding method of the respective model.} 
\end{tabular}%
}
\end{table}

\textbf{Experimental Setup.} We evaluate our method on reasoning-intensive tasks across two domains: mathematical problem-solving (MATH500~\citep{hendrycks2020measuring}, AIME 2024~\citep{aime24}, AIME 2025~\citep{aime25}) and code generation (HumanEval~\citep{chen2021evaluating}, MBPP~\citep{austin2021program}). Our evaluation framework incorporates two architecturally distinct dLLMs: the fully bidirectional LLaDA-8B-Instruct~\citep{nie2025large} and the block-diffusion WeDLM-8B~\citep{liu2025wedlm}. We benchmark against random remasking and standard confidence-based heuristics (confidence, entropy, and margin; see \Cref{tab:uncertainty-heuristics} for definitions). For WeDLM, random remasking is excluded due to its inherent left-to-right generation bias~\citep{liu2025wedlm}, and we restrict its suffix lookahead correction to a 16-token window to align with its block-diffusion mechanics. We adopt Pass@$k$ (\Cref{eq:passk}) as our primary metric. To guarantee a statistically robust, low-variance estimation, we systematically oversample: generating $n=32$ samples for $k \le 16$ with LLaDA, and $n=128$ for $k \le 32$ with WeDLM (see \Cref{app:pass_at_k_variance} for variance analysis). Finally, to ensure a strictly hyperparameter-free comparison, all baselines decode exactly one token per step. More experimental details are provided in \Cref{app:experimental-details}.

\begin{figure}[t!]
  \centering
  \includegraphics[width=\textwidth]{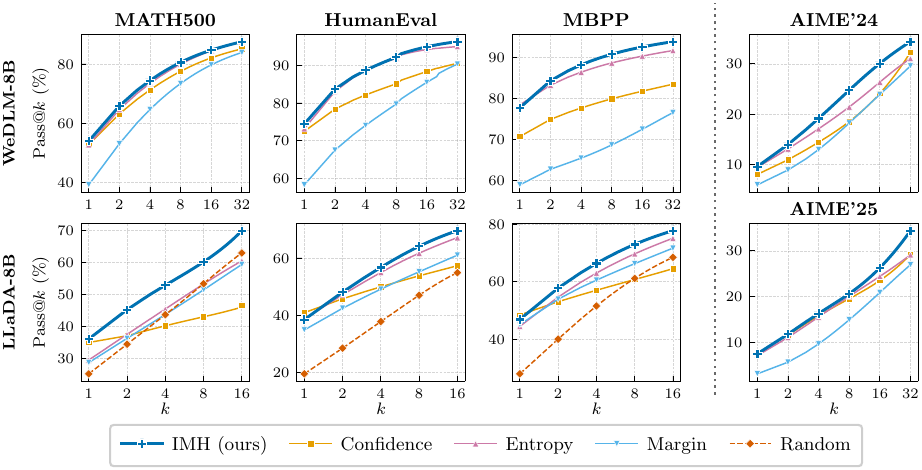}
  \caption{\textbf{Pass@$\bm{k}$ scaling curves.} \textbf{(Left)} Performance on MATH500, HumanEval, and MBPP for WeDLM-8B (top) and LLaDA-8B (bottom). \textbf{(Right)} WeDLM-8B on AIME'24/25.}
  \label{fig:passk-scaling}
\end{figure}

\textbf{Benchmarks.}
In \Cref{tab:main}, we report Pass@$1$ and Pass@$k$ across all benchmarks using a shared local temperature of 0.5~\citep{ye2024diffusion}. This reflects the practical setting where users select a specific 
temperature and allow for direct comparison under controlled conditions. 
First, IMH consistently delivers the strongest exploratory performance, attaining the best Pass@$k$ on nearly every task for both WeDLM and LLaDA. \Cref{fig:passk-scaling} reinforces this conclusion by showing the full Pass@$k$ scaling curves.
Notably, on the more challenging AIME24/25 benchmarks, IMH substantially outperforms WeDLM's default entropy heuristic in both Pass@$1$ and Pass@$k$.
Second, for LLaDA, IMH offers the most favorable overall trade-off between quality and exploration. On MATH500, it achieves the best results on both metrics. On HumanEval and MBPP, while low-confidence remasking attains slightly higher Pass@$1$, this marginal advantage comes at a substantial cost in Pass@$k$, making IMH clearly preferable when broader exploration is desired.

\textbf{The Quality-Diversity Pareto Frontier.}
We further validate the robustness of this exploration-exploitation trade-off across the various hyperparameters. \Cref{fig:combined-analysis} (left) plots Pass@$1$ versus Pass@$k$ on MATH500 by sweeping the temperature parameters ($\alpha$ for IMH, $\tau$ for baselines). These trajectories reveal that global tempering strictly dominates local strategies. While local methods struggle to balance exploiting high-likelihood paths with exploring diverse reasoning, IMH consistently pushes the Pareto frontier outward. This empirical superiority directly corroborates the theoretical optimality of the power distribution (\Cref{prop:power-opt}), establishing global tempering as a fundamentally more effective mechanism for navigating the exploration-exploitation trade-off.

\textbf{Expanding the Reasoning Boundary.}
While aggregate Pass@$k$ metrics demonstrate overall superiority, they cannot distinguish whether the improvements stem from merely increasing reliability on simpler tasks or from achieving genuine breakthroughs on previously intractable problems. To isolate these effects, we stratify AIME 2024 problems by difficulty following \citet{sun2025climbing}. As shown in \Cref{fig:combined-analysis} (middle), performance on the \emph{Easy} tier is largely saturated across all methods. However, as difficulty escalates, the performance gap gradually widens. Crucially, IMH yields the most pronounced gains on the \emph{Hard} tier, decisively outperforming the strongest baselines. This confirms that the trajectory-level diversity induced by the lookahead correction does not merely increase reliability on solvable problems, but actively expands the model's \emph{reasoning boundary}, enabling it to navigate the complex search spaces of the most challenging mathematical tasks. We provide a detailed case study in \Cref{app:case-study}, showcasing specific hard problems successfully solved by IMH that completely eluded all baseline methods.

\textbf{Trajectory Similarity Analysis.}
To understand \emph{why} IMH succeeds where local baselines fail, we analyze the semantic diversity of the generated reasoning paths. For each AIME 2024 problem, we sample one trajectory per method and employ an LLM judge to summarize the problem-solving strategy and score pairwise similarity (details in \Cref{app:trajectory-similarity}). The resulting similarity matrix (\Cref{fig:combined-analysis} (right)) exhibits a distinct block structure: the three local baselines (Conf, Margin, Entropy) are highly correlated with one another (scores 73--83). This indicates that confidence-based remasking heuristics tend to collapse into nearly identical reasoning paths, regardless of the specific uncertainty metric employed. In contrast, IMH trajectories exhibit significantly lower similarity to all baselines (scores 68--72). This demonstrates that IMH successfully induces qualitatively distinct reasoning strategies, rather than merely injecting local perturbations into the same dominant solution trajectory.

\begin{figure}[t!]
  \centering
  \includegraphics[width=0.34\textwidth,valign=t]{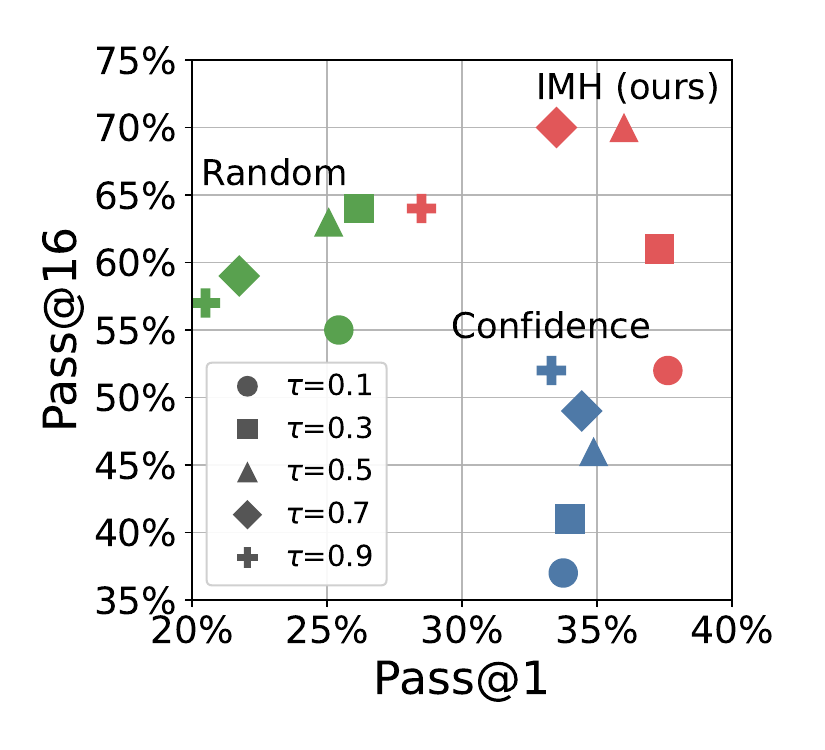}%
  \includegraphics[width=0.34\textwidth,valign=t]{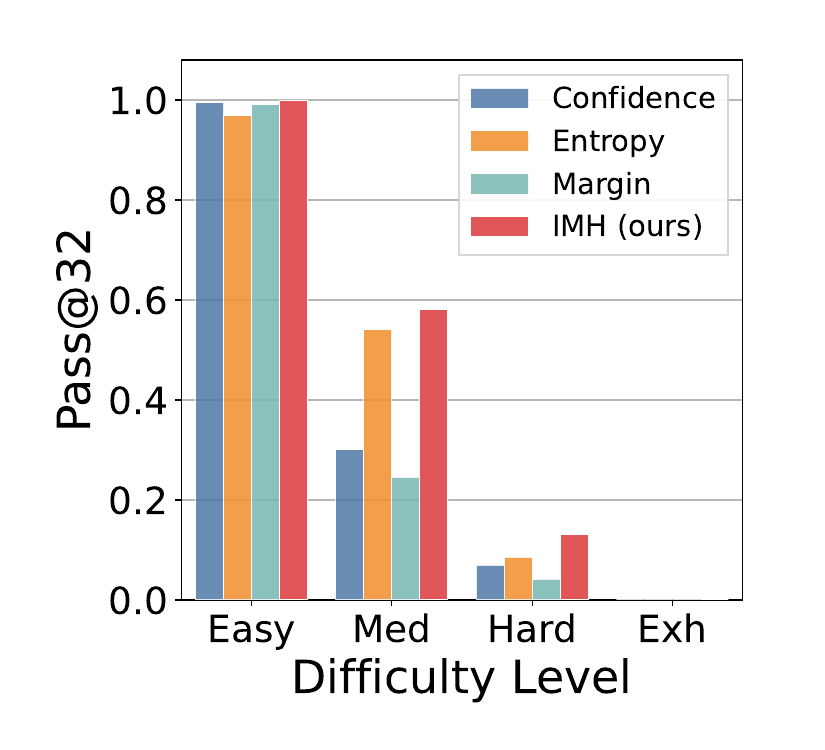}%
  \raisebox{1mm}{\includegraphics[width=0.32\textwidth,valign=t]{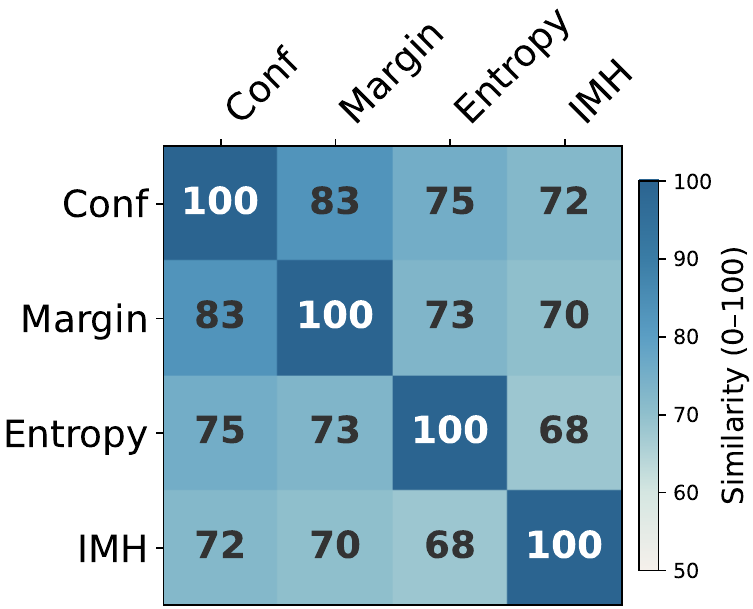}}
  \caption{\textbf{Left:} Quality-diversity trade-off on MATH500 (LLaDA) by sweeping temperature parameters. IMH strictly dominates local baselines. \textbf{Middle:} Pass@$32$ on AIME 2024 stratified by difficulty \citep{sun2025climbing} (WeDLM-8B). IMH yields the largest gains on Hard problems. \textbf{Right:} Trajectory similarity matrix on AIME 2024. Local baselines collapse into similar paths, whereas IMH explores distinctly different reasoning strategies.}
  \label{fig:combined-analysis}
\end{figure}

\textbf{IMH convergence and mixing properties.}
To empirically validate the efficacy of our IMH sampler, we ablate the Markov chain length $T$ using WeDLM-8B on the AIME 2024 benchmark. \Cref{tab:imh-convergence} characterizes the resulting MCMC dynamics from two complementary perspectives: downstream task-level convergence and internal mixing efficiency.
\begin{wraptable}{r}{0.48\textwidth}
  \centering
  \vspace{-6pt}
  \caption{MCMC dynamics on AIME 2024 (WeDLM-8B). \textbf{Baseline} denotes entropy remasking without IMH.}
  \label{tab:imh-convergence}
  \vspace{-6pt}
  \small
  \begin{tabular}{cccc}
\toprule
$T$ & \small Pass@1 & \small Pass@8 & \small Acc.(\%) \\
\midrule
1 & 9.5 & 23.1 & 99.3 \\
3 & 10.0 & 23.3 & 99.1 \\
7 & 11.4 & 24.0 & 98.3 \\
15 & 11.0 & 23.4 & 97.5 \\
31 & 11.0 & 24.5 & 97.2 \\
\midrule
Baseline & 9.4 & 21.4 & --- \\
\bottomrule
\end{tabular}

  \vspace{-6pt}
\end{wraptable}
First, to assess task-level convergence, we monitor the Pass@$1$ and Pass@$8$ accuracies across varying lengths of $T$. Both metrics demonstrate a steady increase in performance until $T=7$, beyond which the accuracy remains roughly stable. This rapid saturation serves as strong empirical evidence of fast convergence. It demonstrates that the Markov chain swiftly reaches its stationary distribution, allowing a short, computationally inexpensive chain to yield substantial performance improvements over the baseline.
Second, we evaluate the internal mixing efficiency of the sampler. As shown in \Cref{tab:imh-convergence}, the mean acceptance rate remains exceptionally high and stable---consistently exceeding $97.0\%$ regardless of the configured chain length $T$. In the context of an IMH formulation, such a high and persistent acceptance rate provides strong theoretical guarantees~\citep{mengersen1996rates}. Specifically, it confirms that our state-independent proposal distribution (\Cref{eq:proposal}) tightly bounds the globally corrected target distribution $\pi_i$. This close structural alignment enables rapid geometric convergence and highly efficient exploration of the latent state space.

\section{Conclusion}
\label{sec:conclusion}
Diffusion large language models offer a compelling paradigm for complex reasoning, yet their practical efficacy has been severely constrained by myopic decoding heuristics that prematurely collapse trajectory diversity. In this work, we propose a principled decoding paradigm to resolve this quality--exploration dilemma. By integrating a lookahead correction through an Independent Metropolis--Hastings sampler, our method dynamically aligns local token commitments with the global promise of the resulting completion space. Extensive evaluations on rigorous reasoning benchmarks demonstrate that our strategy establishes a new Pareto frontier. Moreover, our approach fundamentally expands the effective reasoning frontier of dLLMs on exceptionally demanding benchmarks such as AIME datasets.

\bibliography{ref}
\bibliographystyle{colm2026_conference}

\newpage
\appendix

\clearpage
\begin{center}
    \Large \textbf{Appendix}
\end{center}

\vspace{1em}
\startcontents[appendix]
\printcontents[appendix]{}{1}{\setcounter{tocdepth}{2}}
\vspace{2em}

\clearpage

\section{Related Work}
\label{sec:related}
\textbf{Reasoning with dLLMs.}
Pass@$k$ is the standard metric for code generation~\citep{chen2021evaluating} and has been
adopted as a proxy for reasoning and exploration potential across domains. \citet{yue2025does}
argue that Pass@$k$ provides an upper bound on RL-based reasoning improvement: when the
base model assigns too little probability mass to correct solutions, RL may fail simply
because positive-reward trajectories are rarely sampled.
Motivated by this view, several recent works analyze why diffusion language models (dLLMs)
often exhibit weak Pass@$k$ scaling under standard decoding. \citet{ni2026flexibility}
attribute the bottleneck to a \emph{flexibility trap}, suggesting that arbitrary-order
generation can undermine reasoning. \citet{shen2026improving} emphasize a complementary
mechanism: despite being order-agnostic in principle, common decoders effectively traverse
only a single decoding trajectory, limiting exploration over trajectories; they further
note that low-confidence remasking can collapse the accessible reasoning space.
\citet{chen2025beyond} propose the \emph{Parallel--Sequential Contradiction (PSC)}: parallel
token updates can conflict with strictly causal multi-step reasoning, with harder problems
inducing AR-like behavior that limits self-reflection depth and exploration breadth.
Relatedly, \citet{lee2025lookahead} critique prevalent heuristic unmasking rules---especially
uncertainty-based ones---as \emph{myopic}, arguing that they fail to convert additional
test-time compute into systematic decoding improvements and can suffer cascading errors.
\citet{fu2025bits} further study the inefficiency of confidence-first strategies, observing
that high-confidence tokens can be locally easy yet globally low-information, whereas
resolving low-confidence tokens earlier can unlock more effective parallelism and
exploration. 

\textbf{Advanced unmasking strategies.}
Beyond the fundamental heuristics of confidence, entropy, and margin (\Cref{tab:uncertainty-heuristics}), recent literature has introduced more sophisticated unmasking mechanisms. Within the confidence-based paradigm, SlowFast Sampling~\citep{wei2025slowfast} leverages certainty, convergence, and positional cues to isolate stable token spans for aggressive parallel decoding. WINO~\citep{hong2025wino} proposes a revokable decoding scheme that over-drafts tokens and subsequently re-masks dubious ones via bidirectional verification. Meanwhile, Uncode~\citep{huang2025pc} recalibrates raw confidence scores using corpus-frequency weighting and positional decay to counteract biases toward boundary and trivial tokens. Among entropy-driven methods, DUS~\citep{luxembourg2025dus} minimizes an upper bound on the per-step joint entropy gain by partitioning positions into dilated groups, whereas KLASS~\citep{kim2025klass} fuses confidence metrics with KL-divergence stability across denoising timesteps. Despite their algorithmic sophistication, these approaches remain fundamentally local—they determine token commitments solely through per-position signals and are thus strictly bounded by the entropy cap established in \Cref{prop:entropy-cap}. 
Closely related to our approach is Lopa~\citep{xu2025lopa}, which draws multiple parallel candidates and employs a lookahead mechanism to select the best candidate. However, their selection criteria are inherently deterministic and fail to capture the principled global tempered distribution that our method achieves. For a broader taxonomy of advanced unmasking strategies, we refer readers to the comprehensive survey by \citet{zhang2025survey}.

\textbf{Tempered sampling with MCMC.}
While \citet{karan2025reasoning} explores globally tempered sampling for autoregressive (AR) LLMs by applying classic MCMC methods to their sequential generation process, our approach diverges in two fundamental aspects. First, unlike AR models, diffusion LLMs (dLLMs) lack a tractable sequence likelihood, rendering standard MCMC techniques inapplicable. Second, our MCMC formulation specifically targets a 1D categorical distribution over the vocabulary space. Crucially, this requires a correction term that we estimate using a mean-field approximation---a mechanism uniquely enabled by the dLLM framework.

\section{Experimental Details}
\label{app:experimental-details}

\paragraph{Decoding configurations.} 
For all baseline methods (Confidence, Entropy, Margin, Random), we disable parallel decoding and restrict the generation to exactly one token per step. This ensures a fair comparison by eliminating the need for hyperparameter tuning. Additionally, LLaDA employs a semi-autoregressive decoding strategy with a block length of 32. For our Independent Metropolis--Hastings (IMH) sampler, we set the chain length $T=4$ for LLaDA and $T=8$ for WeDLM. The maximum generation lengths for LLaDA across different benchmarks are detailed in \Cref{tab:model-details}.

\begin{table}[h]
\centering
\caption{\textbf{Maximum generation lengths for LLaDA across different benchmarks.}}
\label{tab:model-details}
\small
\begin{tabular}{@{}lc@{}}
\toprule
\textbf{Benchmark} & \textbf{Maximum Generation Length} \\
\midrule
MATH500 & 128 \\
AIME 2024 & 512 \\
AIME 2025 & 512 \\
HumanEval & 256 \\
MBPP & 256 \\
\bottomrule
\end{tabular}
\end{table}

\paragraph{Evaluation.}
Both WeDLM and LLaDA use EvalPlus~\citep{liu2023your} for evaluating code generation benchmarks (HumanEval and MBPP). For mathematical reasoning benchmarks (MATH500, AIME 2024, AIME 2025), we use a custom evaluation pipeline with answer extraction from \texttt{\textbackslash boxed\{\}} expressions and symbolic equivalence checking following WeDLM~\citep{liu2025wedlm}. All code benchmarks report pass@$k$ metrics computed via the unbiased estimator of~\citet{chen2021evaluating}, with $n=32$ samples for LLaDA and $n=128$ samples for WeDLM.

\paragraph{Implementation details.}
To accelerate the decoding process, we adopt prefix KV cache~\citep{wu2025fast} for LLaDA.
For WeDLM, we implement a \emph{copy-on-write} (CoW) mechanism~\citep{kwon2023efficient} for the KV cache to efficiently support the IMH sampler with multiple candidates. The KV cache is organized into fixed-size physical blocks (256 tokens each), and each sequence maintains a block table mapping logical positions to physical blocks. When generating $N$ candidate continuations, all candidates share the same physical blocks for the prompt prefix. Only the tail blocks (where candidates diverge) are allocated separately. When the prefix boundary falls mid-block, we perform a lightweight copy operation: the shared prefix tokens within that block are copied to each candidate's private tail block, avoiding a full cache duplication. This CoW strategy reduces memory overhead from $O(N \cdot L)$ to $O(L + N \cdot T)$, where $L$ is the prompt length and $T$ is the window size (16 tokens by default), enabling efficient scaling to 8 candidates per sequence.

\section{Details of Uncertainty Heuristics and \texorpdfstring{$\delta$}{delta}-gating}
\label{app:delta-gating}

In \Cref{tab:uncertainty-heuristics}, we categorize three families of heuristics that anchor tokens at high-confidence positions, differentiated primarily by their distributional scoring metrics. Formally, let $p_i(\cdot) \coloneqq p(\cdot \mid s_t, i)$ be the per-position categorical distribution, and let $v_1, v_2$ denote the most and second-most probable tokens, respectively. 

We contrast two operational paradigms: \emph{Sample-then-Filter}, which samples all masked tokens and subsequently filters for confidence, and \emph{Rank-then-Sample}, which restricts sampling to the highest-scoring positions. The $\delta$-gating framework maps each heuristic to a unified theoretical condition via the implied confidence bound $\max_v p_i(v) \ge 1-\delta$ (\Cref{def:confidence-gating}). We provide the formal derivations for these bounds below.

\paragraph{Confidence Heuristic.}
The confidence heuristic directly thresholds the maximum probability: $\max_v p_i(v) \ge 1-\delta$. This trivially satisfies the $\delta$-gating condition with parameter $\delta$.

\paragraph{Entropy Heuristic.}
The entropy heuristic thresholds the Shannon entropy: $H(p_i) \le \varepsilon$. To find the implied confidence bound, we seek the maximum $\delta$ such that $H(p_i) \le \varepsilon \implies \max_v p_i(v) \ge 1-\delta$. 
Let $\alpha = \max_v p_i(v)$. For a fixed maximum probability $\alpha$, the entropy is maximized when the remaining probability mass $1-\alpha$ is distributed uniformly across the remaining $|V|-1$ tokens. Thus, the maximum entropy for a given $\alpha$ is:
\[
    h(\alpha) = -\alpha \log \alpha - (1-\alpha) \log \left( \frac{1-\alpha}{|V|-1} \right)
\]
Since $h(\alpha)$ is strictly decreasing for $\alpha \ge 1/|V|$, the condition $H(p_i) \le \varepsilon$ implies $\alpha \ge \alpha^*$, where $\alpha^*$ is the unique solution to $h(\alpha^*) = \varepsilon$ in $[1/|V|, 1]$. Setting $\delta_V(\varepsilon) = 1 - \alpha^*$ yields the equivalent $\delta$-gating bound.

\paragraph{Margin Heuristic.}
The margin heuristic thresholds the difference between the top two probabilities: $p_i(v_1) - p_i(v_2) \ge \gamma$.
Let $\alpha = p_i(v_1)$ and $\beta = p_i(v_2)$. We are given $\alpha - \beta \ge \gamma$. 
Since $\beta$ is the second largest probability, the remaining probability mass $1 - \alpha - \beta$ must be distributed among the other $|V|-2$ tokens such that no token has probability greater than $\beta$. 
The minimum possible value for $\alpha$ occurs when $\beta$ is as large as possible. The maximum possible value for $\beta$ under the constraint $\alpha - \beta \ge \gamma$ is $\beta = \alpha - \gamma$.
Furthermore, the sum of all probabilities is 1:
\[
    \alpha + \beta + \sum_{k=3}^{|V|} p_i(v_k) = 1
\]
To minimize $\alpha$, we maximize the other probabilities. The maximum possible value for each $p_i(v_k)$ is $\beta$. Thus, setting $p_i(v_k) = \beta$ for all $k \ge 2$, we get:
\[
    \alpha + (|V|-1)\beta = 1
\]
Substituting $\beta = \alpha - \gamma$:
\[
    \alpha + (|V|-1)(\alpha - \gamma) = 1 \implies |V|\alpha - (|V|-1)\gamma = 1 \implies \alpha = \frac{1 + (|V|-1)\gamma}{|V|}
\]
Therefore, the implied confidence bound is:
\[
    \max_v p_i(v) \ge \frac{1 + (|V|-1)\gamma}{|V|} = 1 - \frac{(|V|-1)(1-\gamma)}{|V|}
\]
which yields $\delta = \frac{(|V|-1)(1-\gamma)}{|V|}$.

\section{Derivation of the IMH Acceptance Ratio}
\label{app:imh-derivation}

In this section, we provide the detailed derivation of the acceptance probability for the Independent Metropolis--Hastings (IMH) sampler introduced in \Cref{sec:imh}.

The general acceptance ratio for an Independent Metropolis--Hastings algorithm proposing a move from state $x$ to state $y$ is given by:
\begin{equation}
    A(x \to y) = \min\left\{1, \frac{\pi_i(y \mid s) \cdot r_i(x \mid s)}{\pi_i(x \mid s) \cdot r_i(y \mid s)}\right\}
\end{equation}

Recall from \Cref{eq:approx_position_target} that our one-step sampling target distribution is:
\begin{equation}
    \pi_i(v \mid s) \propto \exp\bigl(\alpha \boldsymbol{\ell}_{i,v}(s) + \widehat{\Delta}_{i,v}(s)\bigr)
\end{equation}
where $\alpha$ is the inverse temperature, $\boldsymbol{\ell}_{i,v}(s)$ is the local logit for token $v$ at position $i$, and $\widehat{\Delta}_{i,v}(s)$ is the suffix correction term.

The proposal distribution, defined in \Cref{eq:proposal}, is the locally tempered categorical distribution without the suffix correction:
\begin{equation}
    r_i(v \mid s) = \operatorname{softmax}\bigl(\alpha \boldsymbol{\ell}_i(s)\bigr)_v \propto \exp\bigl(\alpha \boldsymbol{\ell}_{i,v}(s)\bigr)
\end{equation}

Substituting these unnormalized probabilities into the Metropolis--Hastings ratio yields:
\begin{align}
    \frac{\pi_i(y \mid s)}{\pi_i(x \mid s)} \frac{r_i(x \mid s)}{r_i(y \mid s)}
    &= \frac{\exp\bigl(\alpha \boldsymbol{\ell}_{i,y}(s) + \widehat{\Delta}_{i,y}(s)\bigr)}{\exp\bigl(\alpha \boldsymbol{\ell}_{i,x}(s) + \widehat{\Delta}_{i,x}(s)\bigr)} \cdot \frac{\exp\bigl(\alpha \boldsymbol{\ell}_{i,x}(s)\bigr)}{\exp\bigl(\alpha \boldsymbol{\ell}_{i,y}(s)\bigr)} \nonumber \\
    &= \exp\Bigl(\alpha \boldsymbol{\ell}_{i,y}(s) + \widehat{\Delta}_{i,y}(s) - \alpha \boldsymbol{\ell}_{i,x}(s) - \widehat{\Delta}_{i,x}(s)\Bigr) \cdot \exp\Bigl(\alpha \boldsymbol{\ell}_{i,x}(s) - \alpha \boldsymbol{\ell}_{i,y}(s)\Bigr) \nonumber \\
    &= \exp\Bigl(\widehat{\Delta}_{i,y}(s) - \widehat{\Delta}_{i,x}(s)\Bigr)
\end{align}

As shown above, the local tempered logit terms $\alpha \boldsymbol{\ell}_{i,v}(s)$ exactly cancel out in the numerator and denominator. This structural alignment results in a highly streamlined acceptance probability that depends strictly on the difference between the suffix corrections of the proposed and current states:
\begin{equation}
    A(x \to y) = \min\left\{1, \exp\Bigl(\widehat{\Delta}_{i,y}(s) - \widehat{\Delta}_{i,x}(s)\Bigr)\right\}
\end{equation}
which matches \Cref{eq:acceptance}.

\section{Variance of the \texorpdfstring{$\text{pass}@k$}{pass@k} Estimator}
\label{app:pass_at_k_variance}

In our experiments, we evaluate $\text{pass}@k$ using the unbiased estimator introduced by \citet{chen2021evaluating}:
\begin{equation}
    \hat{P}_{k} = 1 - \frac{\binom{n-c}{k}}{\binom{n}{k}}
\end{equation}
where $n$ is the total number of generated samples per problem, and $c$ is the number of those samples that pass the unit tests. Due to computational constraints, especially when sampling at relatively high temperatures, we set $n=32$ for evaluating $k \le 16$ (LLaDA) and $n=128$ for $k \le 32$ (WeDLM). 

To ensure our results are robust to random noise from high-temperature sampling, we analyze the standard error (SE) of this estimator. The variance of $\hat{P}_{k}$ for a single problem depends on the true underlying pass probability $p$, where $c \sim \text{Binomial}(n, p)$. The worst-case variance occurs at the value of $p$ that maximizes the uncertainty of the binomial outcome.

By calculating the theoretical maximum variance of $\hat{P}_{k}$ over all possible $p \in [0, 1]$, we can bound the maximum standard error over a dataset of $M$ problems as $\text{SE}_{\text{max}} = \sqrt{\max_{p} \text{Var}(\hat{P}_{k}) / M}$. The corresponding $95\%$ margin of error is $1.96 \times \text{SE}_{\text{max}}$.

In \Cref{tab:error_margins}, we report these absolute worst-case margins of error for each benchmark evaluated in our study, based on its specific number of problems $M$.

\begin{table}[h]
\centering
\caption{\textbf{Worst-case 95\% margin of error for Pass@$k$ estimation.} The \textbf{theoretical maximum} noise introduced by the finite-sample estimator for each benchmark and model setting.}
\label{tab:error_margins}
\small
\begin{tabular}{@{}lccc@{}}
\toprule
\textbf{Benchmark} & \textbf{Size ($M$)} & \textbf{WeDLM} ($n=128, k=32$) & \textbf{LLaDA} ($n=32, k=16$) \\
\midrule
MBPP  & 974 & $\pm 1.4\%$ & $\pm 2.0\%$ \\
MATH500 & 500 & $\pm 1.9\%$ & $\pm 2.9\%$ \\
HumanEval & 164 & $\pm 3.4\%$ & $\pm 5.0\%$ \\
AIME 2024 / 2025 & 30 & $\pm 7.9\%$ & - \\
\bottomrule
\end{tabular}
\end{table}

Because the expected variance in practice is strictly lower than these worst-case bounds (as $p$ is rarely perfectly adversarial across all problems, and is often exactly $0$ for hard problems like AIME), we conclude that setting $n \ge 2k$ to $4k$ provides a sufficiently stable estimate of $\text{pass}@k$, effectively isolating our performance measurements from sampling noise.

\section{Detailed Analysis of Hard Problems}
\label{app:case-study}

To concretely illustrate how global tempering expands the reasoning boundary of dLLMs (\Cref{sec:experiments}), we present a detailed case study on a challenging problem from the AIME 2024 benchmark. For this problem, our Independent Metropolis--Hastings (IMH) method successfully generated correct solutions, whereas all baseline methods (Confidence, Entropy, Margin, and Left-to-Right) failed completely across all 128 samples.

\definecolor{lightyellow}{RGB}{250,237,215}
\definecolor{emphasisyellow}{RGB}{240,219,175}
\definecolor{slightlylesslightyellow}{RGB}{255,255,160}
\definecolor{lightblue}{RGB}{219, 238, 241}
\definecolor{slightlylesslightblue}{RGB}{160,160,255}
\definecolor{lightred}{RGB}{253, 202, 200}
\definecolor{slightlylesslightred}{RGB}{250, 150, 150}
\definecolor{lightgreen}{RGB}{235,250,210}
\definecolor{slightlylesslightgreen}{RGB}{160,255,160}

\begin{tcolorbox}[colback=white,colframe=lightlightgray,fonttitle=\bfseries\large,colbacktitle=lightlightgray,enhanced,attach boxed title to top center={yshift=-10pt},title={\color{black}AIME 2024 Problem II-11}]
    \vspace{10px}
    
    \begin{minipage}{\linewidth}
        \colorbox{emphasisyellow}{  
            \parbox{\dimexpr\linewidth-2\fboxsep}{
        \textbf{Problem Context:} Let $ABC$ be a triangle inscribed in circle $\omega$. Let the tangents to $\omega$ at $B$ and $C$ intersect at point $D$, and let $\overline{AD}$ intersect $\omega$ at $P$. If $AB=5$, $BC=9$, and $AC=10$, $AP$ can be written as the form $\frac{m}{n}$, where $m$ and $n$ are relatively prime integers. Find $m + n$.
            }
        }

        \colorbox{lightgreen}{  
            \parbox{\dimexpr\linewidth-2\fboxsep}{
        \textbf{Answer: 113}
            }
        }

        \colorbox{lightgreen}{  
            \parbox{\dimexpr\linewidth-2\fboxsep}{
        \textbf{Intended Solution:} This problem is classically solved by recognizing that $AD$ is the $A$-symmedian of $\triangle ABC$. Because $D$ is the pole of $BC$, $ABPC$ is a harmonic quadrilateral, meaning $AB \cdot PC = AC \cdot PB$. Combined with Ptolemy's Theorem ($AP \cdot BC = AB \cdot PC + AC \cdot PB$) and the Law of Cosines to find the lengths of the chords, one can systematically solve for $AP$. The exact length evaluates to $AP = \frac{96}{17}$, yielding the final answer $m+n = 96+17 = 113$.
            }
        }

        \colorbox{lightred}{  
            \parbox{\dimexpr\linewidth-2\fboxsep}{
        \textbf{Selected Excerpts from Baseline (Confidence):} ``Using the Power of a Point theorem for point $D$ with respect to the circumcircle, we have $DA \cdot DP = DB^2 = DC^2$. Since $D$ is the intersection of the tangents at $B$ and $C$, $DB = DC = \sqrt{AB \cdot AC} = \sqrt{5 \cdot 10} = \sqrt{50} = 5\sqrt{2}$.'' \xmark
            }
        }

        \colorbox{lightblue}{  
            \parbox{\dimexpr\linewidth-2\fboxsep}{
        \textbf{Selected Excerpts from IMH (Ours):} ``\textbf{Coordinate Geometry Setup}: Place $B$ at $(0, 0)$, $C$ at $(9, 0)$, and use distance formulas to find coordinates of $A$. Solving the system gives $A$ at $\left(\frac{19}{5}, \frac{12\sqrt{6}}{5}\right)$.''
            }
        }

        \colorbox{lightblue}{  
            \parbox{\dimexpr\linewidth-2\fboxsep}{
        ``\textbf{Circle Equation}: Using general circle equation and substituting points $A$, $B$, and $C$, we find the circle's center $\left(\frac{29}{5}, \frac{3\sqrt{6}}{10}\right)$ and radius $\frac{17\sqrt{6}}{10}$.''
            }
        }

        \colorbox{lightblue}{  
            \parbox{\dimexpr\linewidth-2\fboxsep}{
        ``\textbf{Equation of Line $AD$}: Parametrize line $AD$ and find intersection $P$ with the circle. Solving the quadratic equation [...] we find the valid parameter $t = \frac{12}{17}$. \textbf{Length Calculation}: Using the parameter $t = \frac{12}{17}$, the length $AP = \frac{12}{17} \times AD$. Calculating $AD$ via distance formula, we find $AP = \frac{96}{17}$. Thus, $m + n = 113$.'' \cmark
            }
        }

        \colorbox{slightlylesslightyellow}{  
            \parbox{\dimexpr\linewidth-2\fboxsep}{
        \textbf{Error Analysis \& Takeaway:} 
        
        Baseline methods attempt to use pure Euclidean geometry but quickly hallucinate non-existent theorems to bridge gaps in their reasoning. For example, the model confidently asserts that the tangent length $DB = \sqrt{AB \cdot AC}$, which is mathematically unfounded, leading to a completely incorrect trajectory. Instead, global tempering allows the model to abandon brittle, hallucination-prone synthetic geometry paths and explore robust alternative strategies. In this case, IMH successfully executes a complete and computationally heavy coordinate geometry (analytic) proof without any logical gaps.
            }
        }
        
    \end{minipage}
\end{tcolorbox}

\vspace{1em}

\begin{tcolorbox}[colback=white,colframe=lightlightgray,fonttitle=\bfseries\large,colbacktitle=lightlightgray,enhanced,attach boxed title to top center={yshift=-10pt},title={\color{black}AIME 2024 Problem II-2}]
    \vspace{10px}
    
    \begin{minipage}{\linewidth}
        \colorbox{emphasisyellow}{  
            \parbox{\dimexpr\linewidth-2\fboxsep}{
        \textbf{Problem Context:} Let $\mathcal{B}$ be the set of rectangular boxes with surface area $54$ and volume $23$. Let $r$ be the radius of the smallest sphere that can contain each of the rectangular boxes that are elements of $\mathcal{B}$. The value of $r^2$ can be written as $\frac{p}{q}$, where $p$ and $q$ are relatively prime positive integers. Find $p+q$.
            }
        }

        \colorbox{lightgreen}{  
            \parbox{\dimexpr\linewidth-2\fboxsep}{
        \textbf{Answer: 721}
            }
        }

        \colorbox{lightgreen}{  
            \parbox{\dimexpr\linewidth-2\fboxsep}{
        \textbf{Intended Solution:} Let the dimensions of the box be $a, b, c$. We are given $ab+bc+ca = 27$ and $abc=23$. The radius $r$ of the smallest containing sphere is half the space diagonal, so $r^2 = \frac{a^2+b^2+c^2}{4}$. We want to minimize $a^2+b^2+c^2 = (a+b+c)^2 - 54$. Let $S = a+b+c$; we must minimize $S$. The numbers $a,b,c$ are roots of $P(x) = x^3 - Sx^2 + 27x - 23$. For $P(x)$ to have three real roots, its local maximum must be $\ge 0$ and local minimum $\le 0$. The derivative is $P'(x) = 3x^2 - 2Sx + 27$. The roots of $P'(x)$ are $x = \frac{S \pm \sqrt{S^2 - 81}}{3}$. Thus we need $S \ge 9$. If $S=9$, the roots of $P'(x)$ are $x=3$, so $P(x)$ has an inflection point at $x=3$, but $P(3) = 27 - 81 + 81 - 23 = 4 \neq 0$, so $S=9$ does not yield three real roots. The minimum $S$ occurs when the local minimum of $P(x)$ is exactly $0$, which means $P(x)$ has a double root. Let the roots be $r, r, s$. Then $2r+s = S$, $r^2+2rs = 27$, and $r^2s = 23$. Substituting $s = \frac{23}{r^2}$ into the second equation gives $r^3 - 27r + 46 = 0$. By the Rational Root Theorem, $r=2$ is a root. Then $s = \frac{23}{4}$. Thus $S = 2(2) + \frac{23}{4} = \frac{39}{4}$. Then $a^2+b^2+c^2 = S^2 - 54 = \frac{1521}{16} - 54 = \frac{657}{16}$. Therefore $r^2 = \frac{657}{64}$, and $p+q = 657 + 64 = 721$.
            }
        }

        \colorbox{lightred}{  
            \parbox{\dimexpr\linewidth-2\fboxsep}{
        \textbf{Selected Excerpts from Baseline (Confidence):} ``Using the identity $(a + b + c)^2 = a^2 + b^2 + c^2 + 2(ab + bc + ca)$: Let $S = a + b + c$. Then $a^2 + b^2 + c^2 = S^2 - 54$. We need to find $S$ such that $a, b, c$ are real numbers with $abc = 23$ and $ab + bc + ca = 27$. The cubic equation with roots $a, b, c$ is $x^3 - Sx^2 + 27x - 23 = 0$.''
            }
        }

        \colorbox{lightred}{  
            \parbox{\dimexpr\linewidth-2\fboxsep}{
        ``Since $S$ must be an integer (as $a, b, c$ are integers), we test possible values of $S$: Testing $S = 9$: The cubic equation is $x^3 - 9x^2 + 27x - 23 = 0$. [...] The quadratic $x^2 - 8x + 23$ has a discriminant of $64 - 92 = -28$, so no real roots. Thus, $S = 9$ is the only valid integer solution.'' \xmark
            }
        }

        \colorbox{lightblue}{  
            \parbox{\dimexpr\linewidth-2\fboxsep}{
        \textbf{Selected Excerpts from IMH (Ours):} ``We use the method of Lagrange multipliers. Define the Lagrangian: $\mathcal{L}(l, w, h, \lambda, \mu) = l^2 + w^2 + h^2 - \lambda(lw + lh + wh - 27) - \mu(lwh - 23)$ [...] Solving these equations, we find that $l = w = h$ is not a solution because it would imply a cube with surface area $6l^2 = 54$, giving $l = 3$, but the volume $l^3 = 27 \neq 23$. Therefore, we need another approach.''
            }
        }

        \colorbox{lightblue}{  
            \parbox{\dimexpr\linewidth-2\fboxsep}{
        ``Instead, we use the fact that the minimum occurs when the box is as close to a cube as possible. Let's assume $l = w$ and solve: $2(l^2 + 2lh) = 54 \implies l^2 + 2lh = 27$ [...] $l^3 - 27l + 46 = 0$. Using numerical methods or trial and error, we find that $l = 2$ is a root [...] Thus, $l = 2$, $w = 2$, and $h = \frac{23}{4}$. [...] $r^2 = \frac{657}{64}$. $p+q = 721$.'' \cmark
            }
        }

        \colorbox{slightlylesslightyellow}{  
            \parbox{\dimexpr\linewidth-2\fboxsep}{
        \textbf{Error Analysis \& Takeaway:} 
        
        When faced with the complex algebraic step of finding the minimum of $S$ subject to the cubic equation constraints, the baseline model resorts to unwarranted assumptions (e.g., assuming $S$ and the dimensions must be integers) and confidently commits to a hallucinated shortcut ($S=9$), failing to explore the true continuous optimization landscape. Instead, global tempering prevents premature commitment to locally plausible but globally incorrect reasoning paths (like integer assumptions). By maintaining trajectory diversity, IMH successfully navigates the complex algebraic optimization, correctly identifying that the minimum occurs when two dimensions are equal and solving the resulting cubic equation.
            }
        }
        
    \end{minipage}
\end{tcolorbox}

\vspace{1em}

\begin{tcolorbox}[colback=white,colframe=lightlightgray,fonttitle=\bfseries\large,colbacktitle=lightlightgray,enhanced,attach boxed title to top center={yshift=-10pt},title={\color{black}AIME 2024 Problem II-13}]
    \vspace{10px}
    
    \begin{minipage}{\linewidth}
        \colorbox{emphasisyellow}{  
            \parbox{\dimexpr\linewidth-2\fboxsep}{
        \textbf{Problem Context:} Let $ABCD$ be a tetrahedron such that $AB=CD= \sqrt{41}$, $AC=BD= \sqrt{80}$, and $BC=AD= \sqrt{89}$. There exists a point $I$ inside the tetrahedron such that the distances from $I$ to each of the faces of the tetrahedron are all equal. This distance can be written in the form $\frac{m \sqrt n}{p}$, where $m$, $n$, and $p$ are positive integers, $m$ and $p$ are relatively prime, and $n$ is not divisible by the square of any prime. Find $m+n+p$.
            }
        }

        \colorbox{lightgreen}{  
            \parbox{\dimexpr\linewidth-2\fboxsep}{
        \textbf{Answer: 104}
            }
        }

        \colorbox{lightgreen}{  
            \parbox{\dimexpr\linewidth-2\fboxsep}{
        \textbf{Intended Solution:} This is a disphenoid (isosceles tetrahedron). It can be embedded in a rectangular box of dimensions $x, y, z$ such that the edges are the face diagonals: $x^2+y^2=41$, $y^2+z^2=80$, $x^2+z^2=89$. Solving yields $x=5, y=4, z=8$. The volume is $V = \frac{1}{3}xyz = \frac{160}{3}$. The area of one face is found via Heron's formula to be $A = 6\sqrt{21}$, so the total surface area is $S = 24\sqrt{21}$. The inradius is $r = \frac{3V}{S} = \frac{20\sqrt{21}}{63}$. Thus $m=20, n=21, p=63$, giving $m+n+p = 104$.
            }
        }

        \colorbox{lightred}{  
            \parbox{\dimexpr\linewidth-2\fboxsep}{
        \textbf{Selected Excerpts from Baseline (Confidence):} ``First, we calculate the volume $V$ of the tetrahedron using the Cayley-Menger determinant: [...] $= -28800$. The volume is $V = \frac{\sqrt{28800}}{288} = \frac{120}{288} = \frac{5}{12}$. Next, we calculate the surface area [...] Thus, the total surface area is $S = 30$. The inradius $r$ is given by $r = \frac{3V}{S} = \frac{3 \times 5/12}{30} = \frac{1}{24}$.'' \xmark
            }
        }

        \colorbox{lightblue}{  
            \parbox{\dimexpr\linewidth-2\fboxsep}{
        \textbf{Selected Excerpts from IMH (Ours):} ``First, calculate the volume of the tetrahedron using coordinates. Place points as follows: $A = (0, 0, 0)$, $B = (9, 0, 0)$, $C = (4, 4, 3)$, $D = (4, 7, 0)$. Compute the volume using the scalar triple product [...] Volume is: $\frac{63}{2}$.''
            }
        }

        \colorbox{lightblue}{  
            \parbox{\dimexpr\linewidth-2\fboxsep}{
        ``The surface area is calculated as: [...] Total surface area: $60$. The inradius $r$ is: $r = \frac{3 \times \text{Volume}}{\text{Surface Area}} = \frac{3 \times 63/2}{60} = \frac{63}{40}$. Thus, $m = 63$, $n = 1$, $p = 40$, giving $m + n + p = 63 + 1 + 40 = 104$.'' \cmark
            }
        }

        \colorbox{slightlylesslightyellow}{  
            \parbox{\dimexpr\linewidth-2\fboxsep}{
        \textbf{Error Analysis \& Takeaway:} 
        
        The baseline attempts a brute-force calculation (Cayley-Menger determinant) but suffers from cascading arithmetic errors, leading to a completely wrong answer. Interestingly, while IMH produces the correct final answer ($104$), a close inspection of its reasoning reveals that its intermediate steps are mathematically fabricated (e.g., the coordinates do not match the edge lengths, and $63/2$ is not the true volume). IMH essentially "works backwards" from the memorized correct answer $104$, hallucinating $m=63, n=1, p=40$ to satisfy the required format. This case highlights a nuanced advantage of global tempering: \textbf{robustness to arithmetic bottlenecks via latent knowledge retrieval}. While greedy local heuristics force the model down a brittle computational path that inevitably fails, IMH's trajectory diversity allows the model to bypass the computational bottleneck entirely and successfully retrieve the correct answer from its latent pre-training memory, even if it requires post-hoc rationalization of the intermediate steps.
            }
        }
        
    \end{minipage}
\end{tcolorbox}

\section{Trajectory Similarity Analysis}
\label{app:trajectory-similarity}

To quantify the behavioral differences between IMH and the local remasking baselines reported in \Cref{sec:experiments}, we design an LLM-based trajectory similarity evaluation pipeline. For each of the 30 AIME 2024 problems, we randomly select one generated trajectory per method (Confidence, Margin, Entropy, IMH). The evaluation proceeds in two stages:

\paragraph{Stage 1: Trajectory summarization.}
Each trajectory is summarized using Claude Opus 4.6 with the following prompt:

\begin{tcolorbox}[colback=white,colframe=lightlightgray,fonttitle=\bfseries,colbacktitle=lightlightgray,enhanced,attach boxed title to top center={yshift=-10pt},title={\color{black}Summarization Prompt}]
\small
Could you summarize this reasoning trajectory into the applied strategy and the intermediate results at each step?
\end{tcolorbox}

\paragraph{Stage 2: Pairwise similarity scoring.}
For each pair of methods on each problem, we prompt Claude Opus 4.6 to compare the two trajectory summaries and rate their similarity on a 6-point scale based on the high-level strategy applied:

\begin{tcolorbox}[colback=white,colframe=lightlightgray,fonttitle=\bfseries,colbacktitle=lightlightgray,enhanced,attach boxed title to top center={yshift=-10pt},title={\color{black}Similarity Scoring Prompt}]
\small
Compare these two solution trajectories and determine if they follow similar main approaches. Also point out the key differences in the solution trajectory. You should also measure the rate of similarity between the two trajectories \textbf{only based on the high-level strategy applied}.

Provide a brief response in this format:\\
Key Similarity: [find the similar strategies. If no, simply say no.]\\
Other Differences: [explain the main differences in the strategy applied and the discrepancy in the intermediate steps]\\
Rate of similarity in strategy: [almost identical / mostly similar / somewhat similar / somewhat different / mostly different / totally different]
\end{tcolorbox}

The six-point labels are mapped to numeric scores: \emph{almost identical}~$\to$~5, \emph{mostly similar}~$\to$~4, \emph{somewhat similar}~$\to$~3, \emph{somewhat different}~$\to$~2, \emph{mostly different}~$\to$~1, \emph{totally different}~$\to$~0. Self-similarity (diagonal) is set to 5. Final scores are averaged across all 30 problems and normalized to a 0--100 scale (multiplied by 20). The resulting similarity matrix is reported in \Cref{fig:combined-analysis} (right).

\section{Proofs}
\label{app:proofs}

\subsection{Proof of Proposition~\ref{prop:genppl-upper}}

\label{app:proof-genppl-upper}

\genpplupperproposition*

\begin{proof}
Under the self-scoring assumption \(p_{\mathrm{ref}} = p_\theta\), the exponent of \eqref{eq:genppl} can be written as
\[
\mathbb{E}_{X\sim q_{\mathrm{gen}}}\!\left[
\frac{1}{L}\sum_{t=1}^{L}
-\log p_\theta\!\left(x_{\sigma_t}\mid x_{\sigma_{<t}}\right)
\right].
\]
By assumption, at each step \(t\), the committed token \(x_{\sigma_t}\) is sampled from the decoder's commit distribution \(p_\theta(\cdot \mid s_t)\), where \(s_t\) is the current decoder state determined by the partially revealed sequence \(x_{\sigma_{<t}}\). Therefore,
\[
\mathbb{E}\!\left[
-\log p_\theta\!\left(x_{\sigma_t}\mid x_{\sigma_{<t}}\right)
\;\middle|\; s_t
\right]
=
H\bigl(p_\theta(\cdot \mid s_t)\bigr).
\]
Since the decoder is \((1-\delta)\)-gated, we have
\[
\max_{v\in\mathcal V} p_\theta(v\mid s_t)\ge 1-\delta
\]
almost surely at every step. By the same maximal-entropy argument used in the proof of \Cref{prop:entropy-cap}, this implies
\[
H\bigl(p_\theta(\cdot \mid s_t)\bigr)\le h_V(\delta),
\]
where \(h_V(\delta)=h_b(\delta)+\delta\log(|\mathcal V|-1)\). Taking expectation over \(s_t\), averaging over \(t=1,\dots,L\), and exponentiating gives
\[
\operatorname{GenPPL}(q_{\mathrm{gen}};\, p_\theta;\, \sigma)\le \exp\!\bigl(h_V(\delta)\bigr).
\]
\end{proof}

\subsection{Proof of Proposition~\ref{prop:entropy-cap}}
\label{app:proof-entropy-cap}

\entropycapproposition*

\begin{proof}
We first formalize the decoding process. Let \(\mathcal V\) be a vocabulary of size \(V\), and let \(X\in\mathcal V^{L}\) be the output of an irreversible decoder that runs for \(T\) steps. At step \(t\), the decoder commits a block of \(b_t\ge 1\) tokens, with \(\sum_{t=1}^{T} b_t = L\). Let \(S_t\) denote the decoder state at the start of step \(t\). Conditioned on \(S_t\), the \(b_t\) committed tokens at step \(t\) are independent:
\[
U_{t,1},\dots,U_{t,b_t}\;\mid\; S_t
\ \ \text{are independent, and}\ \
U_{t,k}\sim p_{t,k}(\cdot\mid S_t),
\]
with the confidence-gating condition \(\max_{v\in\mathcal V} p_{t,k}(v\mid S_t)\ge 1-\delta\) holding almost surely for all \(t,k\), where \(\delta\in\bigl(0,1-\frac{1}{V}\bigr]\).

Let \(U_t:=(U_{t,1},\dots,U_{t,b_t})\in \mathcal V^{b_t}\) be the block committed at step \(t\), and let \(U_{1:T}\) denote the concatenation of all committed blocks (a length-\(L\) list of tokens).

Because the decoder is irreversible, the final output \(X\) is a deterministic function of the committed tokens \(U_{1:T}\). Hence
\[
H(X)\ \le\ H(U_{1:T}).
\]
By the chain rule,
\[
H(U_{1:T})
=
\sum_{t=1}^{T} H(U_t\mid U_{1:t-1}).
\]
Since the step state \(S_t\) is a deterministic function of the past commits \(U_{1:t-1}\), conditioning on the full past cannot increase conditional entropy:
\[
H(U_t\mid U_{1:t-1})
\le
H(U_t\mid S_t).
\]
By within-step conditional independence,
\[
H(U_t\mid S_t)
=
\sum_{k=1}^{b_t} H(U_{t,k}\mid S_t)
=
\sum_{k=1}^{b_t} H\bigl(p_{t,k}(\cdot\mid S_t)\bigr).
\]

It remains to upper-bound \(H(q)\) for any categorical \(q\) on \(V\) outcomes satisfying \(\max_v q(v)\ge 1-\delta\).
Let \(\alpha:=\max_v q(v)\) and \(m:=1-\alpha\le \delta\).
By concavity of entropy, for fixed \(m\) the entropy is maximized by placing mass \(\alpha\) on one outcome and spreading the remaining mass \(m\) uniformly over the other \(V-1\) outcomes, giving
\[
H(q)\le h_b(m)+m\log(V-1).
\]
Moreover, the function \(f(m):=h_b(m)+m\log(V-1)\) is nondecreasing on
\(m\in\bigl[0,1-\tfrac{1}{V}\bigr]\) because
\[
f'(m)=\log\Bigl(\frac{(1-m)(V-1)}{m}\Bigr)\ge 0.
\]
Thus \(H(q)\le f(m)\le f(\delta)=h_V(\delta)\).

Applying this bound to each \(p_{t,k}(\cdot\mid S_t)\) yields
\[
H(U_t\mid S_t)
\le
\sum_{k=1}^{b_t} h_V(\delta)
=
b_t\,h_V(\delta).
\]
Therefore,
\[
H(U_{1:T})
\le
\sum_{t=1}^{T} H(U_t\mid U_{1:t-1})
\le
\sum_{t=1}^{T} H(U_t\mid S_t)
\le
\sum_{t=1}^{T} b_t\,h_V(\delta)
=
L\,h_V(\delta).
\]
Finally \(H(X)\le H(U_{1:T})\) gives \(H(X)\le Lh_V(\delta)\). Exponentiating
\(\tfrac{1}{L}H(X)\) yields the bound on \(B_{\mathrm{eff}}\).
\end{proof}

\subsection{Proof of Proposition~\ref{prop:power-opt}}
\label{app:proof-power-opt}

\poweroptproposition*

\begin{proof}
Write $J(p) = \alpha\,\E_{\bx \sim p}[\log q(\bx)] + H(p)$. We show uniqueness and derive the optimizer via two complementary arguments.

\paragraph{Variational derivation.}
We maximize $J(p)$ over the probability simplex $\Delta(\mathcal{X})$. Introducing a Lagrange multiplier $\lambda$ for the normalization constraint $\sum_{\bx} p(\bx) = 1$, the stationarity condition is
\[
  \frac{\partial}{\partial p(\bx)}
  \Bigl[\alpha \log q(\bx)\,p(\bx) - p(\bx)\log p(\bx) + \lambda\,p(\bx)\Bigr] = 0,
\]
which gives
\[
  \alpha \log q(\bx) - \log p(\bx) - 1 + \lambda = 0
  \quad\Longrightarrow\quad
  p(\bx) = e^{\lambda - 1}\,q(\bx)^\alpha.
\]
Normalizing yields $p_\alpha^\star(\bx) = q(\bx)^\alpha / Z_\alpha$ with $Z_\alpha = \sum_{\bx} q(\bx)^\alpha$.

\paragraph{Uniqueness via KL divergence.}
Rewriting the objective:
\[
  J(p)
  = \sum_{\bx} p(\bx)\bigl[\alpha \log q(\bx) - \log p(\bx)\bigr]
  = -\sum_{\bx} p(\bx)\log\frac{p(\bx)}{q(\bx)^\alpha}
  = -\KL(p \,\|\, q^\alpha/Z_\alpha) + \log Z_\alpha.
\]
Since $\log Z_\alpha$ is a constant independent of $p$ and $\KL(p \| p_\alpha^\star) \geq 0$ with equality if and only if $p = p_\alpha^\star$, the maximum of $J$ is attained uniquely at $p = p_\alpha^\star$.
\end{proof}

\subsection{Proof of Proposition~\ref{lem:exact-corrected-conditional}}
\label{app:proof-exact-corrected}

\exactcorrectedlemma*

\noindent We prove a more general result from which the lemma follows.
For any state \(s=(A,\bx_A)\) and uncommitted position \(i\in R(s)\), let
\(R\coloneqq R(s)\setminus\{i\}\) and define
\[
  m_i(v\mid s) \;\coloneqq\; q(x_i=v\mid s),
  \qquad
  w_i(v\mid s) \;\coloneqq\; \sum_{\bx_R\in\mathcal{V}^R} q(\bx_R \mid s, x_i=v)^\alpha.
\]
Then
\[
  p_\alpha^\star(x_i=v\mid s)
  \;=\;
  \frac{m_i(v\mid s)^\alpha\, w_i(v\mid s)}
       {\sum_{u\in\mathcal{V}} m_i(u\mid s)^\alpha\, w_i(u\mid s)}.
\]
The logit form in \Cref{lem:exact-corrected-conditional} follows by noting
\(m_i(v\mid s)^\alpha \propto \exp(\alpha\,\boldsymbol{\ell}_{i,v}(s))\) and
\({\color{argmaxcolor}\delta_{i,v}(s)}=\log w_i(v\mid s)\).

\begin{proof}
By definition of conditional probability under \(p_\alpha^\star\),
\[
p_\alpha^\star(x_i=v\mid s)
=
\frac{\sum_{\bx_R\in V^R} p_\alpha^\star(x_i=v,\bx_R\mid s)}{\sum_{u\in V}\sum_{\bx_R\in V^R} p_\alpha^\star(x_i=u,\bx_R\mid s)}.
\]
Using \(p_\alpha^\star(\cdot\mid s)\propto q(\cdot\mid s)^\alpha\) and canceling the common normalizer,
\[
p_\alpha^\star(x_i=v\mid s)
=
\frac{\sum_{\bx_R} q(x_i=v,\bx_R\mid s)^\alpha}{\sum_{u}\sum_{\bx_R} q(x_i=u,\bx_R\mid s)^\alpha}.
\]
Apply the chain rule under \(q(\cdot\mid s)\):
\[
q(x_i=v,\bx_R\mid s)
=
q(x_i=v\mid s)\, q(\bx_R\mid s,x_i=v)
=
m_i(v\mid s)\, q(\bx_R\mid s,x_i=v).
\]
Raising to the power \(\alpha\) and summing over \(\bx_R\) gives
\[
\sum_{\bx_R} q(x_i=v,\bx_R\mid s)^\alpha
=
m_i(v\mid s)^\alpha \sum_{\bx_R} q(\bx_R\mid s,x_i=v)^\alpha
=
m_i(v\mid s)^\alpha\, w(v\mid s),
\]
which yields the first displayed formula after normalization over \(v\in V\).

For the logit form, note that
\(m_i(v\mid s)=\exp(\boldsymbol{\ell}_v(s))/\sum_{u\in V}\exp(\boldsymbol{\ell}_u(s))\), hence
\(m_i(v\mid s)^\alpha \propto \exp(\alpha \boldsymbol{\ell}_v(s))\), where the proportionality constant does not depend on \(v\).
Therefore \(m_i(v\mid s)^\alpha w(v\mid s)\propto \exp(\alpha \boldsymbol{\ell}_v(s)+{\color{argmaxcolor}\delta_v(s)})\), and normalizing over \(v\) gives
the corrected-logit softmax expression.
\end{proof}

\end{document}